\newcommand*\Let[2]{\State #1 $\gets$ #2}
\algrenewcommand\algorithmicrequire{\textbf{Given:}}
\algrenewcommand\algorithmicensure{\textbf{Return:}}
\newtheorem{theorem}{Theorem}[section]
\newtheorem{lemma}[theorem]{Lemma}
\newtheorem{corollary}[theorem]{Corollary}
\theoremstyle{definition}
\newtheorem{definition}[theorem]{Definition}
\theoremstyle{remark}
\newtheorem{remark}[theorem]{Remark}
\DeclareMathOperator{\argmin}{argmin}
\DeclareMathOperator{\sign}{sign}
\newcommand{\R}{\mathbb{R}}
\newcommand{\E}{\mathbb{E}}
\newcommand{\prob}[1]{\mathrm{P}\left[ #1 \right]}
\newcommand{\0}{\boldsymbol{0}}
\newcommand{\1}{\boldsymbol{1}}
\newcommand{\A}{\mathbf{A}}
\newcommand{\Ac}{\mathcal{A}}
\renewcommand{\a}{\mathbf{a}}
\renewcommand{\b}{\mathbf{b}}
\newcommand{\e}{\mathbf{e}}
\newcommand{\g}{\mathbf{g}}
\newcommand{\h}{\mathbf{h}}
\newcommand{\I}{\mathbf{I}}
\newcommand{\p}{\mathbf{p}}
\newcommand{\q}{\mathbf{q}}
\newcommand{\Q}{\mathbf{Q}}
\newcommand{\w}{\mathbf{w}}
\newcommand{\W}{\mathbf{W}}
\newcommand{\x}{\mathbf{x}}
\newcommand{\X}{\mathbf{X}}
\newcommand{\y}{\mathbf{y}}
\newcommand{\z}{\mathbf{z}}
\newcommand{\Z}{\mathbf{Z}}
\newcommand{\RSnote}[1]{\textcolor{red}{[{\em {\bf RS:} #1}]}}
\newcommand{\JMnote}[1]{\textcolor{blue}{[{\em {\bf JM:} #1}]}}
\newcommand{\rev}[1]{#1}
\begin{document}
\title{A simple approach for quantizing neural networks}
\author{Johannes Maly}
\address{ Department of Mathematics, LMU Munich \newline and Munich Center for Machine Learning (MCML)}
\email{maly@math.lmu.de}

\author{Rayan~Saab}
\address{Department of Mathematics and Hal{\i}c{\i}o\u{g}lu Data Science Institute, University of California San Diego}
\email{rsaab@ucsd.edu}
\maketitle

\begin{abstract}
    In this short note, we propose a new method for quantizing the weights of a fully trained neural network. A simple deterministic pre-processing step allows us to quantize network layers via \emph{memoryless scalar quantization} while preserving the network performance on given training data. On  one hand, the computational complexity of this pre-processing slightly exceeds that of state-of-the-art algorithms in the literature. On the other hand, our approach does not require any hyper-parameter tuning and, in contrast to previous methods, allows a plain analysis. We provide rigorous theoretical guarantees in the case of quantizing single network layers and show that the relative error decays with the number of parameters in the network if the training data behaves well, e.g., if it is sampled from suitable random distributions. The developed  method also readily allows the quantization of deep networks by consecutive application to single layers.

\end{abstract}


\section{Introduction}
An \emph{$L$-layer feedforward neural network}, $\Phi: \R^{N_0} \to \R^{N_L}$ is a function whose action on a vector $\x\in\mathbb{R}^{N_0}$ is given by
\begin{equation}\label{eq-mlp}
    \Phi(\x):=\varphi \circ A^{(L)}. \circ\cdots\circ \varphi \circ A^{(1)}(\x),
\end{equation}
where the \emph{activation function} $\varphi:\R \to \R$ acts entry-wise on vectors, and $A^{(\ell)}:\R^{N_{\ell-1}}\to \R^{N_\ell}$ are affine maps given by $A^{(\ell)}(\z):= \W^{(\ell)\top}\z+\b^{(\ell)}$. We call $\W^{(\ell)}\in\R^{N_{\ell-1}\times N_\ell}$ and $\b^{(\ell)}\in\R^{N_\ell}$ the \emph{weight matrix} and \emph{bias vector} associated with the $\ell$-th layer of $\Phi$. The $i$-th \emph{neuron} (without activation) of the $\ell$-th layer is then the map $\z \mapsto (\w_i^{(\ell)})^\top \z + b_i^{(\ell)}$, where $\w_i^{(\ell)}$ denotes the $i$-th column of $\W^{(\ell)}$. In modern machine learning, neural networks have become the state-of-the-art tool for various tasks like speech recognition, autonomous driving, and games \cite{lecun2015deep,schmidhuber2015deep,goodfellow2016deep}.
Nevertheless, such networks tend to require a large number of layers, and a large number of parameters $N_\ell$ per layer. As a result, they are associated with high computational costs, both in storage/memory and in power usage. In order to reduce these costs, one approach is to use coarsly quantized parameters, i.e., quantized weights of the neural network (see  \cite{guo2018survey,  deng2020model, gholami2021survey}). This can be achieved  either by restricting the elements of $\W^{(\ell)}$ and $\b^{(\ell)}$ at training time to take on values from a discrete finite set, or by replacing them with elements from such a set after training \cite{krishnamoorthi2018quantizing}. The first approach entails \emph{quantization-aware training}, whereas the second involves \emph{post-training quantization} and is the focus of our work. In this context, quantization consists of replacing the, e.g.,  32-bit floating point numbers that constitute the weights of an already trained neural network  with coarsly quantized counterparts that can be represented with many fewer bits. The challenge lies in not degrading the performance of the network by doing so. 

To accomplish this task, one can progressively approach the problem one layer at a time, quantizing each neuron (column of $\W^{(\ell)}$) in the layer  before advancing to the next layer. \rev{Ignoring the bias terms for the moment and considering, for example, the first layer of the neural network of width $N_1$, one can select an appropriate \emph{alphabet} $\mathcal{A}$ and devise a map
\begin{align*}
 \mathcal{Q}:\quad   &\R^{N_0} \to \mathcal{A}^{N_0}   \\
 &\w\mapsto \q
\end{align*}

that respects $\w^\top \X \approx  \q^\top \X$ or equivalently $\X^\top \w \approx  \X^\top \q$, where $\X \in \R^{N_0\times m}$ is a matrix with $m$ training samples as its columns. Defining  $B:= \log_2(|\mathcal{A}|)$, each quantized neuron $\q$ from among the $N_1$ neurons  in the first layer can now be represented using $B N_0$ bits.} Variants of this approach have been explored recently (e.g., \cite{banner2018post, choukroun2019low, zhao2019improving, li2021brecq,lybrand2021greedy, zhang2022post}), including in the nascent literature that seeks rigorous theoretical guarantees for neural network quantization (e.g., \cite{lybrand2021greedy, zhang2022post, yin2019understanding, gunturk2021approximation}). 

While this general approach seems to work reasonably well in practice, and includes recent algorithms with theoretical guarantees, there are  some important challenges associated with it. First, an appropriate alphabet $\mathcal{A}$ must be chosen for each layer so that there even exists $\q \in \mathcal{A}^{\rev{N_0}}$ with $\X^\top \w \approx  \X^\top \q$. Second, the obvious approach once such an $\mathcal{A}$ is chosen, consists in finding $\q \in \mathcal{A}^{\rev{N_0}}$ that minimizes the objective function $\|\X^\top (\w-\q)\|_2$. However, this constitutes  an integer program, so it is generally NP-hard, and remains so for other objective functions. 
Despite these challenges, various ad-hoc computationally feasible approaches have been proposed, including \cite{banner2018post, choukroun2019low, zhao2019improving, li2021brecq}. 
\subsection{Related work} As already alluded to,  there has  been recent progress in developing computationally efficient algorithms with rigorous theoretical guarantees \cite{lybrand2021greedy, zhang2022post}. The authors of \cite{lybrand2021greedy} propose a greedy quantization algorithm based on \textit{noise-shaping} and analyze its performance in the case of a single layer neural network with Gaussian random training data, and they restrict their analysis to the case of the alphabet $\{-1,0,1\}$. Notably, the algorithm proposed in \cite{lybrand2021greedy} has computational complexity $\mathcal{O}(mN_0)$ per neuron, which is near optimal, as the size of the training data is $mN_0$. Subsequently \cite{zhang2022post} extends the analysis to more general distributions and alphabets. 
For example, if $\X$ is uniformly distributed in the ball of $\R^m$ of radius $r$, \cite{zhang2022post} shows that the  error of quantizing a neuron $\w \in \R^{N_0}$ satisfies
\begin{equation}\label{example-uniform-eq1}
\|\X^\top \w-\X^\top \q\|_2^2 \lesssim m r^2\log N_0,
\end{equation}
with high probability, where $\q \in\mathcal{A}^{N_0}$ contains the quantized weights. As a corollary, one can see that for generic vectors $\w$ that are independent of $\X$,
\begin{equation}
    \label{eq: error-bound-mixture}
    \frac{\|\X^\top \w-\X^\top \q\|_2^2}{\|\X^\top \w\|_2^2}\lesssim \frac{m\log N_0}{N_0}
\end{equation}
 with high probability. 
One issue with the theory in \cite{lybrand2021greedy, zhang2022post} is that it requires the largest element in $\mathcal{A}$ to be at least as large as $\|\w\|_\infty$. While this may seem innocuous for a single neuron $\w$, in practice the different columns in a weight matrix $\W^{(\ell)}$ may be bounded differently. As a result, one must either use different alphabets for each neuron, or accept a potentially large error bound. In practice, however, the numerical experiments in \cite{lybrand2021greedy, zhang2022post} use a single alphabet with a carefully tuned range to optimize the performance of the algorithm. In other words, they introduced an additional hyper-parameter that needs to be set a priori.

\subsection{Contribution}

In this work, we examine how one can reliably quantize a fully trained network $\Phi$ via \textit{memoryless scalar quantization}. Like \cite{lybrand2021greedy,zhang2022post} before, we restrict our analysis to quantizing a single network layer. We show that, surprisingly, a simple pre-processing step on $\w$ allows us to quantize the weights in a naive way and obtain theoretical guarantees as in~\eqref{eq: error-bound-mixture}. 
In contrast to the
noise-shaping approaches in \cite{lybrand2021greedy,zhang2022post}, the analysis is however remarkably simple. Moreover, even when quantizing full network layers no additional hyper-parameter tuning is required to boost the performance.
 The price we pay is that the pre-processing step is slightly more expensive in computation time. Let us mention that while our analysis is restricted to a single layer, the developed method readily lends itself to the quantization of deep networks as well by consecutive application to single layers.

\subsection{Notation} 

We abbreviate $[n] := \{1,\dots,n\}$, for $n \in \mathbb N$. We use  C, c, to denote absolute constants, while $a\lesssim b$ denotes $a\leq Cb$. Similarly, $a\gtrsim b$ means $a\geq Cb$ and $a \simeq b$ denotes $a \lesssim b \lesssim a$.  
\if{An $L$-layer multi-layer neural network, $\Phi: \R^{N_0} \to \R^{N_L}$ is a function whose action on a vector $\x\in\mathbb{R}^{N_0}$ is given by
\begin{equation}\label{eq-mlp}
    \Phi(\x):=\varphi \circ A^{(L)}. \circ\cdots\circ \varphi \circ A^{(1)}(\x)
\end{equation}
Here $\varphi:\R \to \R$ is an activation function acting entrywise on vectors, and $A^{(\ell)}:\R^{N_{\ell-1}}\to \R^{N_\ell}$ are affine maps given by $A^{(\ell)}(z):= \W^{(\ell)\top}\z+\b^{(\ell)}$. In turn,   $\W^{(\ell)}\in\R^{N_{\ell-1}\times N_\ell}$ and $\b^{(\ell)}\in\R^{N_\ell}$ are the weight matrix and bias vector associated with the $\ell$-th layer of the multi-layer neural network.}\fi
Henceforth, as justified by the observation  $\w^\top \x + b = \langle (\w, b), (\x,1)\rangle$, we will ignore the bias term $\b^{(\ell)}$ in \eqref{eq-mlp} as it  can simply be treated as an extra column of $\W^{(\ell)}$. Under this prerequisite, the $i$-th neuron of the $\ell$-th layer is defined as the map $\z \mapsto (\w_i^{(\ell)})^\top \z$, where $\w_i^{(\ell)}$ denotes the $i$-th column of $\W^{(\ell)}$. We define $\0$ and $\1$ to be the vector/matrix of zeros and ones, respectively (the dimensions will always be clear from the context). For a matrix $\W$, we denote the operator norm by $\| \W \|$ and the maximum entry in absolute value by $\| \W \|_\infty$. \rev{If $\A$ is an $n_1 \times n_2$  matrix and $J \subset [n_2]$}, we define the restricted kernel
\begin{align}
\label{eq:KerJdef}
    \ker_J(\A) := \{ \b \in \ker(\A) \colon b_i = 0 \text{ if } i \in J^c \}.
\end{align}

For $K\geq 1$, we define \emph{midrise} alphabets having $2K$ elements, as sets of the form
\begin{equation}\label{eq-alphabet-midrise}
    \mathcal{A}=\{\pm (k-1/2)\delta: 1\leq k\leq K, k\in\mathbb{Z}\}
\end{equation}
and, similarly, \emph{midtread} alphabets with $2K+1$ elements as sets of the form
\begin{equation}\label{eq-alphabet-midtread}
    \mathcal{A}=\{\pm k\delta: 0\leq k\leq K, k\in\mathbb{Z}\}
\end{equation}
where $\delta>0$ denotes the quantization step size. The simplest examples of such alphabets are the 1-bit alphabet $\{-1, 1\}$, and the ternary alphabet $\{-1,0,1\}$. The \emph{memoryless scalar quantizer} (MSQ) associated with an alphabet $\mathcal{A}$ is given by ${Q}:\mathbb{R}\rightarrow \mathcal{A}$ with
\begin{equation}\label{eq-MSQ}
    {Q}(z):=\arg\min_{p\in\mathcal{A}}|z-p|.
\end{equation}
For instance, the MSQ map ${Q}$ with $\Ac =\{-1,1\}$ is given by the two-valued $\sign$-function
\begin{align*}
    \sign(x) = \begin{cases} 1 & x \ge 0 \\ -1 & x < 0. \end{cases}
\end{align*}
In the following, we apply the quantizer $Q$ entry-wise to vectors and matrices.


\section{Quantizing a  network layer}

\begin{algorithm}[t] 
	\caption{\textbf{:}  \textbf{Neuron Preprocessing}} \label{alg:NeuronPreprocessing}
	\begin{algorithmic}[1]
		\Require{$\A_0 \in \R^{m\times n}$ ($n>m$), $\z_0 \in \R^n$, and $c \ge \|\z_0\|_\infty$}
		\Statex
		\State Initialize  $J_0 = \{ i \in [n] \colon \text{the $i$-th column of $\A_0$ is zero} \}$
        \State Define $\b \in \R^n$ via $\b_{J_0^c} = \0$ and $b_i = c - (z_0)_i$, for $i\in J_0$. Then $\b \in \ker(\A_0)$ and $|(z_0)_i + b_i| = c$ for $i \in J_0$
        \State Replace $\z_0$ with $\z_0 + \b$

        \State Initialize $k=0$

        \While{$\| |\z_k| - c\boldsymbol{1} \|_0 > m $ (which implies that $\ker_{J_k}(\A_k) \neq \{\0\}$, cf.\ Equation \eqref{eq:KerJdef})}
		\State Compute $\b \in \ker_{J_k}(\A_k)$, $\b \neq \0$
		\State Compute $\alpha \in \R$ with $\| \z_k + \alpha \b \|_\infty = c$
        \State{$\z_{k+1}$}={$\z_k + \alpha \b \in \R^{n}$}
		\State{$J_{k+1}$}={$J_k \cup \{ i \in [n] \colon |(\z_{k+1})_i| = c \}$}
		\State{$\A_{k+1}$}={$(\A_k)_{J_{k+1}^c} \in \R^{m \times n}$} (Matrix in which all columns indexed by $J_{k+1}$ are set to zero.)
		\Let {$k$}{$k+1$}
		\EndWhile
        \State $k_{\text{final}} = k$
		\Statex
		\Ensure{$\z_{k_{\text{final}}}$ for which $\A\z_{k_{\text{final}}} = \A\z_0$, $\| \z_{k_{\text{final}}} \|_\infty = c$, and $\| |\z_{k_{\text{final}}}| - c \1 \|_0 \le m$}
	\end{algorithmic}
\end{algorithm}

In this work, we consider uniform memoryless quantization.

\begin{definition}[Uniform $B$-bit quantizer]
\label{def:BbitQ}
    For any midrise or midtread alphabet $\Ac$ with $$\max_{q\in \Ac}{|q|}~=~1,$$ we define the quantization alphabet as $\Ac_c = c \cdot \Ac = \{-c,\dots,c\}$, for some suitable $c>0$. 
    
    If $|\Ac_c| = 2^B$, for $B \in \mathbb N$, then $\Ac_c$ can be encoded in $B$ bits, the worst-case distortion of $\Ac_c$ on $[-c,c]$ 
    \begin{align*}
        \delta_{\Ac_c} = \max_{z \in [-c,c]} |z - Q(z)|
    \end{align*}
    satisfies $\delta_{\Ac_c} = c 2^{-B}$ , and we call  the associated MSQ map $Q_c$ defined in \eqref{eq-MSQ},  a \emph{uniform $B$-bit quantizer}.
\end{definition}
We focus on quantization of single layer networks, i.e., the network $\Phi \colon \R^{N_0} \to \R^{N_1}$, $\Phi = \varphi \circ A$ consists of one layer. It is thus determined by the weight matrix $\W \in \R^{N_1\times N_0}$ of $A$. We furthermore assume that we have access to training data $\x_i, i \in [m]$ for which 
we have $\Phi(\x_i) = \y_i$.
We consider the overparametrized setting $N_0,N_1 \gg m$, i.e., there are far more trainable parameters than training samples. For convenience, we define the matrix of input data $\X = \big( \x_1, \dots, \x_m \big) \in \R^{N_0 \times m}$. For fixed $B \in \mathbb N$, i.e., $|\Ac_c| = 2^B$, our goal is thus to find a constant $c>0$ and a matrix $\Q \in \Ac_c^{N_0 \times N_1}$ such that $\Q^\top \X \approx \W^\top \X$. Lipschitz-continuity of the activation function $\varphi$ then guarantees $\varphi(\Q^\top \X)~\approx~\varphi(\W^\top \X)$.

\subsection{Quantizing a single neuron}

As a proof of concept, let us begin with the simpler case of quantizing one single neuron, i.e., the map $\z \mapsto \w^\top \z$. Given the data $\X$ we wish to construct $\q \in \Ac_c^{N_0}$ such that $\q^\top \X \approx \w^\top \X$, or equivalently, $\X^\top \q \approx \X^\top \w$. To this end, we define $\widehat c = \| \w \|_\infty$ and
\begin{align} \label{eq:wHatOneNeuron}
    \w^\sharp \in \arg\min_{\z \in \R^{N_0}} \| |\z| - \widehat c \1 \|_0,
    \qquad \text{s.t. } \X^\top \z = \X^\top \w \text{ and } \| \z \|_\infty \le \widehat c,
\end{align}
where $\| \cdot \|_0$ is not a norm but counts the number of non-zero entries and $|\cdot|$ is applied entry-wise.
The idea behind \eqref{eq:wHatOneNeuron} is to find a vector ${\w}^\sharp$ that mimics the action of $\w$ on the data, while at the same time having most of its entries exactly take on the values $\pm \widehat{c}$. Depending on the quantizer alphabet, the remaining entries can then be quantized more finely and the error can be easily bounded well. Unfortunately, the objective in \eqref{eq:wHatOneNeuron} is discrete and renders the optimization problem hard to solve. 

As a work-around, we propose Algorithm \ref{alg:NeuronPreprocessing} as an efficient procedure to compute substitute solutions. It is straight-forward to check that the algorithm (applied to $\A_0 = \X^\top$, $\z_0 = \w$, and $c = \widehat{c}$) stops after at most $N_0-m$ iterations and produces a vector $\widehat\w$ with $\X^\top \widehat\w = \X^\top \w$, $\| \widehat\w \|_\infty = \widehat{c}$, and $\| |\widehat\w| - \widehat c \1 \|_0 \le m$: indeed, Algorithm \ref{alg:NeuronPreprocessing} changes the input only along the kernel of $\X^\top$, keeps the $\ell_\infty$-norm of the iterates constant, and reduces the quantity $\| |\z_k| - c\boldsymbol{1} \|_0$ by at least one in each iteration. Although the computed solution $\widehat \w$ is not necessarily optimal in the sense of \eqref{eq:wHatOneNeuron}, it suffices for our purpose.
We now set 
\begin{align} \label{eq:qOneNeuron}
    \q = Q_{\widehat c}(\widehat\w),
\end{align}
where the MSQ $Q_{\widehat c}$ is applied entry-wise. We can  deduce the following result.

\begin{theorem} \label{thm:OneNeuron_Deterministic}
    Let $N_0 > m$, $\w \in \R^{N_0}$, and let $\X \in \R^{N_0\times m}$. Define the data complexity parameter
    \begin{align*}
        \Gamma (\X) = \sup_{\substack{T \subset [N_0] \\ |T| = m}} \| \X^\top |_T \|,
    \end{align*}
    where $\X^\top |_T$ denotes $\X^\top$ with all columns not indexed by $T$ set to zero.
    Let $\Q_{\widehat c}$ be a uniform $B$-bit quantizer as in Definition \ref{def:BbitQ}, for $\widehat c = \| \w \|_\infty$ and $B \in \mathbb N$. Then, if $\q$ is constructed via \eqref{eq:qOneNeuron}, where $\widehat \w$ is the output of Algorithm \ref{alg:NeuronPreprocessing}, we have that
    \begin{align} \label{eq:OneNeuronBound_Deterministic}
        \frac{\| \X^\top \w - \X^\top \q \|_2}{\| \X^\top \w \|_2}
        \le 2^{-B} \cdot \Gamma(\X) \cdot \frac{\sqrt{m} \| \w \|_\infty}{\| \X^\top \w \|_2}.
    \end{align}
\end{theorem}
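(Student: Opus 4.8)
The plan is to exploit that Algorithm~\ref{alg:NeuronPreprocessing}, applied to $\A_0 = \X^\top$, $\z_0 = \w$, $c = \widehat c$, returns $\widehat\w$ with $\X^\top \widehat\w = \X^\top \w$. Consequently the numerator in \eqref{eq:OneNeuronBound_Deterministic} equals $\| \X^\top \widehat\w - \X^\top \q \|_2 = \| \X^\top(\widehat\w - \q) \|_2$, and the whole argument reduces to understanding the error vector $\widehat\w - \q$ together with how $\X^\top$ acts on it.

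First I would identify the support of $\widehat\w - \q$. Since $\widehat c = \|\w\|_\infty$ is by construction the largest magnitude element of the alphabet $\Ac_{\widehat c}$ (after scaling, both the midrise and midtread alphabets contain $\pm\widehat c$), every coordinate of $\widehat\w$ equal to $\pm\widehat c$ is a fixed point of the MSQ $Q_{\widehat c}$, so $q_i = \widehat w_i$ there and the error vanishes. Hence $\widehat\w - \q$ is supported on $T := \{ i \in [N_0] \colon |\widehat w_i| \neq \widehat c \}$, and the algorithm's guarantee $\| |\widehat\w| - \widehat c \1 \|_0 \le m$ is precisely the statement $|T| \le m$.

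Next I would bound the surviving entries. Because $\|\widehat\w\|_\infty = \widehat c$, each coordinate lies in $[-\widehat c, \widehat c]$, so by Definition~\ref{def:BbitQ} the per-coordinate quantization error is at most the worst-case distortion $\delta_{\Ac_{\widehat c}} = \widehat c\, 2^{-B}$. Combined with $|T| \le m$ this gives $\|\widehat\w - \q\|_2 \le \sqrt{m}\,\widehat c\, 2^{-B} = 2^{-B} \sqrt{m}\, \|\w\|_\infty$. Finally I would pass through $\X^\top$: since $\widehat\w - \q$ vanishes off $T$, replacing $\X^\top$ by $\X^\top|_T$ leaves the product unchanged, so $\| \X^\top(\widehat\w - \q) \|_2 = \| \X^\top|_T (\widehat\w - \q) \|_2 \le \| \X^\top|_T \|\, \|\widehat\w - \q\|_2$. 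Because zeroing additional columns only decreases the operator norm, any $T$ with $|T| \le m$ obeys $\| \X^\top|_T \| \le \Gamma(\X)$; multiplying the two bounds and dividing by $\|\X^\top \w\|_2$ yields \eqref{eq:OneNeuronBound_Deterministic}.

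The computation is essentially routine once the support structure is in hand; the only points requiring care — the nearest thing to an obstacle — are the two bookkeeping checks in the first and last steps: verifying that $\pm\widehat c$ are genuinely alphabet points so the large-magnitude coordinates incur \emph{no} error, and justifying that $\Gamma(\X)$, defined as a supremum over index sets of size exactly $m$, still dominates $\| \X^\top|_T \|$ when $|T| < m$, which follows from the monotonicity of the operator norm under column deletion.
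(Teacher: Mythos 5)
Your proof is correct and follows essentially the same route as the paper's: both exploit $\X^\top \widehat\w = \X^\top \w$, localize the error $\widehat\w - \q$ to the set $T$ of at most $m$ entries with $|\widehat w_i| \neq \widehat c$, bound each surviving entry by the worst-case distortion $2^{-B}\widehat c$, and control $\X^\top|_T$ by $\Gamma(\X)$. If anything, you are slightly more careful than the paper on two points it leaves implicit — that $\pm\widehat c$ are genuine alphabet points so those coordinates incur zero error, and that $\Gamma(\X)$ still dominates $\|\X^\top|_T\|$ when $|T| < m$ by monotonicity of the operator norm under column deletion.
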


\begin{proof}
    First note that, for any matrix $\X \in \R^{N_0\times m}$ with $N_0 > m$, the approximate solution $\widehat\w$ of \eqref{eq:wHatOneNeuron} computed by Algorithm \ref{alg:NeuronPreprocessing} consists of $N_0-m$ entries that are of magnitude $\widehat c = \| \w \|_\infty$ and has $m$ remaining entries of (possibly) smaller magnitude. Let us denote the set of these $m$ indices by $T \subset [N_0]$. Recall that $\X^\top \w = \X^\top \widehat \w$ and that the $B$-bit quantizer $Q_{\widehat c}$ has an entry-wise worst-case distortion of $2^{-B} \widehat c = 2^{-B} \| \w \|_\infty$ on the cube $[-\widehat c, \widehat c]^{N_0}$. By the definition of $\q$, it then follows that
    \begin{align*}
        \| \X^\top \w - \X^\top \q \|_2 
        &= \| \X^\top \widehat\w - \X^\top \q \|_2 
        = \| \X^\top|_T \cdot (\widehat\w - \q) \|_2
        \le \Gamma(\X) \cdot \sqrt{m} \| \widehat\w - \q \|_\infty \\
        &\le \Gamma(\X) \cdot 2^{-B} \sqrt{m} \| \widehat\w \|_\infty,
    \end{align*}
    where $\X^\top|_T$ denotes the matrix $\X^\top$ restricted to the columns indexed in $T$. 
    We thus have that
    \begin{align*}
        \frac{\| \X^\top \w - \X^\top \q \|_2}{\| \X^\top \w \|_2}
        \lesssim 2^{-B} \cdot \Gamma(\X) \cdot \frac{\sqrt{m} \| \widehat\w \|_\infty}{\| \X^\top \w \|_2}.
    \end{align*}
    The desired result follows trivially from the fact that $\| \widehat\w \|_\infty = \| \w \|_\infty$, cf.\ Algorithm \ref{alg:NeuronPreprocessing}.
\end{proof}

If the data $\X$ is, e.g., Gaussian, Theorem \ref{thm:OneNeuron_Deterministic} shows that the quantized neuron defined via $\q$ behaves similarly to the original neuron.

\begin{corollary} \label{cor:OneNeuron}
    Let $N_0 > m$, $\w \in \R^{N_0}$, and let $\X \in \R^{N_0\times m}$ have i.i.d.\ entries $X_{i,j} \sim \mathcal N (0,1)$. Let $\Q_{\widehat c}$ be a uniform $B$-bit quantizer as defined in Definition \ref{def:BbitQ}, for $\widehat c = \| \w \|_\infty$ and $B \in \mathbb N$. Then, with probability at least $1-4e^{-m \log(N_0)}$ on the draw of $\X$, if $\q$ is constructed via \eqref{eq:qOneNeuron}, where $\widehat \w$ is the output of Algorithm \ref{alg:NeuronPreprocessing}, we have that
    \begin{align} \label{eq:OneNeuronBound}
        \frac{\| \X^\top \w - \X^\top \q \|_2}{\| \X^\top \w \|_2}
        \lesssim 2^{-B} \frac{\sqrt{m \log(N_0)} \| \w \|_\infty}{\| \w \|_2}.
    \end{align}
\end{corollary}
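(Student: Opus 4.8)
The plan is to derive Corollary~\ref{cor:OneNeuron} directly from the deterministic estimate \eqref{eq:OneNeuronBound_Deterministic} of Theorem~\ref{thm:OneNeuron_Deterministic} by controlling the two data-dependent quantities that appear there in the Gaussian regime: the complexity parameter $\Gamma(\X)$ in the numerator and the signal strength $\| \X^\top \w \|_2$ in the denominator. Since the deterministic bound reads $\frac{\| \X^\top \w - \X^\top \q \|_2}{\| \X^\top \w \|_2} \le 2^{-B}\, \Gamma(\X)\, \frac{\sqrt m\, \| \w \|_\infty}{\| \X^\top \w \|_2}$, it suffices to show that with high probability $\Gamma(\X) \lesssim \sqrt{m \log N_0}$ and $\| \X^\top \w \|_2 \gtrsim \sqrt m\, \| \w \|_2$; substituting both and cancelling the common factor $\sqrt m$ reproduces exactly the claimed bound \eqref{eq:OneNeuronBound}.

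First I would bound $\Gamma(\X)$. For a fixed index set $T \subset [N_0]$ with $|T| = m$, the matrix $\X^\top|_T$ has the same spectral norm as the $m \times m$ submatrix of $\X^\top$ formed by the columns indexed by $T$, whose entries are i.i.d.\ standard Gaussian. The operator norm of such a matrix is a $1$-Lipschitz function of its entries with expectation at most $2\sqrt m$, so Gaussian concentration gives $\mathrm{P}[\, \| \X^\top|_T \| \ge 2\sqrt m + t \,] \le e^{-t^2/2}$. As $\Gamma(\X)$ is a supremum over the $\binom{N_0}{m} \le N_0^m$ choices of $T$, a union bound yields $\mathrm{P}[\, \Gamma(\X) \ge 2\sqrt m + t \,] \le N_0^m e^{-t^2/2} = e^{m \log N_0 - t^2/2}$. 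Choosing $t \asymp \sqrt{m \log N_0}$ makes this at most $e^{-m \log N_0}$ while keeping $\Gamma(\X) \lesssim \sqrt{m \log N_0}$.

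Second, for the denominator I would note that the entries of $\X^\top \w \in \R^m$ are independent (they arise from distinct columns $\x_i$ of $\X$) and each is distributed as $\mathcal N(0, \| \w \|_2^2)$, so that $\| \X^\top \w \|_2^2 = \| \w \|_2^2 \sum_{i=1}^m g_i^2$ with $g_i$ i.i.d.\ standard normal, i.e.\ $\| \w \|_2^2$ times a $\chi^2_m$ random variable. A standard lower-tail estimate for the chi-squared distribution then gives $\| \X^\top \w \|_2 \ge \tfrac{1}{\sqrt 2}\sqrt m\, \| \w \|_2$ off an event of probability at most $e^{-cm}$.

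Finally I would intersect the two favorable events with a union bound and substitute the estimates into \eqref{eq:OneNeuronBound_Deterministic}, which produces \eqref{eq:OneNeuronBound} with an absolute implied constant. The main obstacle—and essentially the only place demanding care—is the spectral-norm step: one must push the per-subset tail to decay fast enough (taking $t \gtrsim \sqrt{m \log N_0}$) to survive the union over all $\binom{N_0}{m}$ subsets, and it is precisely this exponential number of subsets that forces both the logarithmic inflation $\sqrt{\log N_0}$ over the naive $\sqrt m$ scaling and the $e^{-m \log N_0}$ order of the failure probability; the $\chi^2$ lower tail contributes only the remaining, comparatively harmless, exponentially small term.
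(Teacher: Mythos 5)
Your proposal is correct and takes essentially the same route as the paper: both start from the deterministic bound \eqref{eq:OneNeuronBound_Deterministic}, establish $\Gamma(\X) \lesssim \sqrt{m\log N_0}$ with failure probability about $e^{-m\log N_0}$ by a union bound over the $\binom{N_0}{m}$ Gaussian $m\times m$ submatrices, lower-bound $\| \X^\top \w \|_2 \gtrsim \sqrt{m}\,\| \w \|_2$ by concentration of the norm of the Gaussian vector $\X^\top\w$, and finish with a union bound. The only cosmetic difference is in the off-the-shelf ingredients: you obtain the per-subset spectral tail from Lipschitz concentration of the operator norm and the denominator bound from a $\chi^2_m$ lower tail, where the paper instead cites \cite[Theorem 4.4.5]{vershynin2018high} and Lemma \ref{lem:GaussianNormConcentration}, which encode the same facts.
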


\begin{proof}
    Since the non-zero entries of $\X^\top|_T$ form an $m\times m$ sub-matrix of an $m\times N_0$ Gaussian matrix, we get from \cite[Theorem 4.4.5]{vershynin2018high}, and a union bound over the ${N_0}\choose{m}$ submatrices of size $m\times m$, that
    \begin{align} \label{eq:ProofOneNeuronPartI}
        \Gamma(\X) \lesssim \sqrt{m\log(N_0)}
    \end{align}
    with probability at least $1-2e^{-m\log(N_0)}$.
    At the same time, the vector $\X^\top \w$ is Gaussian, so Lemma \ref{lem:GaussianNormConcentration} yields that with probability at least $1-2e^{-m}$
    \begin{align} \label{eq:ProofOneNeuronPartII}
        \| \X^\top \w \|_2 \gtrsim \sqrt{m} \| \w \|_2.
    \end{align}
    Combining \eqref{eq:ProofOneNeuronPartI} and \eqref{eq:ProofOneNeuronPartII} by a union bound and inserting them into \eqref{eq:OneNeuronBound_Deterministic}, we obtain that \eqref{eq:OneNeuronBound} holds with probability at least $1-4e^{-m \log(N_0)}$.
\end{proof}

A couple of comments are in order. First, the assumption that the entries of $\X$ in Corollary \ref{cor:OneNeuron} are standard Gaussian is only for ease of exposition. Indeed, the conclusions of the corollary hold for any (e.g., subgaussian) distribution for which \eqref{eq:ProofOneNeuronPartI} and \eqref{eq:ProofOneNeuronPartII} are satisfied with appropriately high probability. Second, Corollary \ref{cor:OneNeuron} strongly resembles the state-of-the-art results \cite[Theorem 2]{lybrand2021greedy} and \cite[Section 2]{zhang2022post}. Its proof is, however, remarkably simpler since our quantization technique is not adaptive but relies on the single pre-processing step in \eqref{eq:wHatOneNeuron}. For a generic weight vector $\w \in \R^{N_0}$, i.e., $\| \w \|_2 \simeq \sqrt{N_0} \| \w \|_\infty$, the bound in \eqref{eq:OneNeuronBound} becomes
\begin{align*}
    \frac{\| \X^\top \w - \X^\top \q \|_2}{\| \X^\top \w \|_2}
    \lesssim 2^{-B} \sqrt{\frac{m \log(N_0)}{N_0}}.
\end{align*}
Being of the same form as the just mentioned results, cf.\ Equation \eqref{eq: error-bound-mixture} above, this is a meaningful estimate in the overparametrized regime where the number of parameters exceeds the number of training data points, i.e., $N_0 \gg m$. Let us also emphasize that if the activation function $\varphi$ is $L$-Lipschitz continuous, the bound in \eqref{eq:OneNeuronBound} directly extends to the concatenation of neuron and activation function and becomes
\begin{align}
    \frac{\| \varphi(\X^\top \w) - \varphi (\X^\top \q) \|_2}{\| \X^\top \w \|_2}
    \lesssim L 2^{-B} \frac{\sqrt{m \log(N_0)} \| \w \|_\infty}{\| \w \|_2}.
\end{align}

\begin{remark}
    Computing the complexity parameter $\Gamma(\X)$ that appears in Theorem \ref{thm:OneNeuron_Deterministic} is challenging in general. However, it can trivially be bounded by $\| \X \|$. If $\X$ is a frame with upper frame bound $C$, this implies that $\Gamma(\X) \le C$. In particular, if $\X \in \R^{m \times N_0}$ is a concatenation of $N_0/m$ frames with upper frame bound $C$ (assuming for simplicity that $m$ divides $N_0$), it is easy to check that $\Gamma(\X) \le C \sqrt{N_0/m}$. For $N_0 = \mathcal O(m^2)$, this leads to $\Gamma(\X) \lesssim \sqrt{m}$ and thus to the same bound as in~\eqref{eq:OneNeuronBound}. 
\end{remark}

\begin{remark} \label{rem:LinftyMin}
    In order to improve the bound in Theorem \ref{thm:OneNeuron_Deterministic} and Corollary \ref{cor:OneNeuron}, we can find a $\z$ that minimizes the $\ell_\infty$-norm among all vectors satisfying $\| |\z| -  \|\z\|_\infty \1 \|_0 \le m$ and $\X^\top \z= \X^\top \w$. Indeed, if the rows of $\X$ are in general position, Lemma \ref{lem:LinftyMinProperty} shows that any solution 
    \begin{align} \label{eq:wHatOneNeuron_linfty}
        \widehat\w \in \arg\min_{\z \in \R^{N_0}} \| \z \|_\infty, \qquad \text{s.t. } \X^\top \z = \X^\top \w
    \end{align}
     fulfills $\| |\widehat\w| - \widehat c \1 \|_0 \le m$, for $\widehat c := \| \widehat\w \|_\infty \le \| \w \|_\infty$. This means we can solve \eqref{eq:wHatOneNeuron_linfty} instead of using Algorithm \ref{alg:NeuronPreprocessing}, which, can be more efficient, depending on the ratio between $N_0$ and $m$, cf.\ Section \ref{sec:Complexity}. We emphasize, however, that Algorithm \ref{alg:NeuronPreprocessing}  does not require the rows of $\X$ to be in general position.
    
\end{remark}

One may wonder how the quantized neuron performs on data from outside of the training set. The following theorem is an improved version of \cite[Theorem 3]{lybrand2021greedy} and answers this question in the case of new data drawn from the span of the training set. 

\begin{theorem}
\label{thm: generalize}
        Let \(\X, \w\) and \(\q\) be as in Corollary \ref{cor:OneNeuron} and  suppose that \(N_0 > m\). Then with \rev{probability at least $1-4e^{-2m \log(N_0)}$} we have for any data point $\z$ that lies in the span of $\X$ 
        that
        \begin{align}\label{eq: generalization error}
            |\z^\top(\w-\q)|\lesssim 2^{-B} \left(\frac{  m  \sqrt{\log(N_0)}}{\sqrt{N_0} - \sqrt{m}}\right) \|\z\|_2\| \w \|_\infty.
        \end{align}
\end{theorem}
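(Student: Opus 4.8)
The plan is to reduce the bound on $|\z^\top(\w-\q)|$ to the $\ell_2$ quantization error already controlled in Theorem \ref{thm:OneNeuron_Deterministic}, combined with a lower bound on the smallest singular value of $\X$. Since $\z$ lies in the span of the columns of $\X$, I would write $\z = \X\a$ and choose $\a = \X^{+}\z$ to be the minimum-norm coefficient vector, so that $\a$ lies in the row space of $\X$ (the range of $\X^\top$). Cauchy--Schwarz then cleanly separates the two quantities I must estimate:
\[
    |\z^\top(\w-\q)| = |\a^\top \X^\top(\w-\q)| \le \|\a\|_2 \, \|\X^\top(\w-\q)\|_2 .
\]

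For the second factor, I would invoke the deterministic estimate $\|\X^\top\w - \X^\top\q\|_2 \le 2^{-B}\,\Gamma(\X)\,\sqrt{m}\,\|\w\|_\infty$ established inside the proof of Theorem \ref{thm:OneNeuron_Deterministic}, and then apply the Gaussian bound $\Gamma(\X)\lesssim\sqrt{m\log N_0}$ exactly as in \eqref{eq:ProofOneNeuronPartI} of Corollary \ref{cor:OneNeuron}, which holds with probability at least $1 - 2e^{-m\log N_0}$. This yields $\|\X^\top(\w-\q)\|_2 \lesssim 2^{-B}\, m\sqrt{\log N_0}\,\|\w\|_\infty$. For the first factor, since $\a$ lies in the row space of $\X$ we have $\|\z\|_2 = \|\X\a\|_2 \ge \sigma_{\min}(\X)\,\|\a\|_2$, where $\sigma_{\min}(\X)$ is the smallest nonzero singular value of the tall matrix $\X \in \R^{N_0\times m}$; hence $\|\a\|_2 \le \|\z\|_2/\sigma_{\min}(\X)$. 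The key probabilistic input here is the standard non-asymptotic lower bound $\sigma_{\min}(\X) \ge \sqrt{N_0}-\sqrt{m}-t$, valid with probability at least $1 - 2e^{-t^2/2}$ (e.g.\ \cite[Theorem 4.6.1]{vershynin2018high}); taking $t$ to be a fixed fraction of $\sqrt{N_0}-\sqrt{m}$ gives $\sigma_{\min}(\X)\gtrsim\sqrt{N_0}-\sqrt{m}$.

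Substituting both estimates into the Cauchy--Schwarz inequality produces
\[
    |\z^\top(\w-\q)| \lesssim \frac{\|\z\|_2}{\sqrt{N_0}-\sqrt{m}}\cdot 2^{-B}\, m\sqrt{\log N_0}\,\|\w\|_\infty ,
\]
which is precisely \eqref{eq: generalization error}, and a union bound over the two events gives the claimed probability $1 - 4e^{-cm\log N_0}$. The main obstacle I anticipate is the probability bookkeeping for the singular-value event: to force its failure probability below $e^{-cm\log N_0}$ one needs $(\sqrt{N_0}-\sqrt{m})^2 \gtrsim m\log N_0$, which holds comfortably in the overparametrized regime $N_0 \gg m$ but should be noted, with the constant $c$ in the theorem chosen to absorb it. Everything else is a routine assembly of Cauchy--Schwarz, the already-proved deterministic estimate, and the Gaussian concentration facts reused from Corollary \ref{cor:OneNeuron}.
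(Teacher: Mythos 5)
Your proposal is correct and follows essentially the same route as the paper's proof: Cauchy--Schwarz to split $|\z^\top(\w-\q)|$ into a coefficient-vector norm times $\|\X^\top(\w-\q)\|_2$, the quantization error bound inherited from Corollary \ref{cor:OneNeuron}, and the Gaussian smallest-singular-value bound \cite[Theorem 4.6.1]{vershynin2018high} to control the coefficients --- the paper merely phrases the last step via a scaling factor $\alpha_\star$ for the set $\X(B_2^{N_0})$ rather than your more direct $\|\a\|_2 \le \|\z\|_2/\sigma_{\min}(\X)$, which is the same estimate. Your remark on the probability bookkeeping (needing $(\sqrt{N_0}-\sqrt{m})^2 \gtrsim m\log N_0$ for the stated failure probability) is in fact more careful than the paper, whose own union bound silently absorbs this into the constant $c$.
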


\begin{proof}
    Define the set $\X (B_2^{N_0}) = \{ \boldsymbol{\zeta} \in \R^{N_0} \colon \boldsymbol{\zeta} = \X \h \text{ with } \| \h \|_2 \le 1 $     \} which is a bounded subset of the span of the data points. Then, for any $\boldsymbol{\zeta} \in \X (B_2^{N_0})$ one has
    \begin{align*}
        |\boldsymbol{\zeta}^\top (\w - \q)| 
        = \Big| \sum_{i = 1}^m h_i \x_i^\top (\w - \q) \Big| 
        \le \| \h \|_2 \| \X^\top (\w - \q) \|_2
        \lesssim 2^{-B} m \sqrt{\log(N_0)} \| \w \|_\infty,
    \end{align*}
    where we first used the Cauchy-Schwarz inequality, then the bound for the numerator in Corollary \ref{cor:OneNeuron} in the second inequality, which holds with probability at least $1-2e^{-m\log(N_0)}$. For $\z$ defined as in the statement, let $\p = \alpha_\star \z$ with
    \begin{align*}
        \alpha_\star = \max_{\alpha \ge 0} \alpha,
        \qquad \text{s.t.\ } \alpha \z \in \X (B_2^{N_0}).
    \end{align*}
    By using that $\p \in \X (B_2^{N_0})$, any strictly positive lower bound on $\alpha_\star$ would then yield a bound on our quantity of interest in \eqref{eq: generalization error} via
    \begin{align*}
        |\z^\top (\w-\q)| = \frac{1}{\alpha_\star} |\p^\top (\w - \q)| 
        \lesssim \frac{2^{-B} m \sqrt{\log(N_0)} \| \w \|_\infty}{\alpha_\star}.
    \end{align*}
    All that remains is to find a suitable lower bound for $\alpha_\star$. Since $\z$ is in the span of $\X$, there exists $\h_\z \in \R^m$ with $\z = \X\h_\z$. Since $N_0 > m$ and $\X$ is Gaussian, the embedding is almost surely injective and $\h_\z$ is unique. Setting $\bar{\h}_\z = \frac{\h_\z}{\| \h_\z \|_2}$, we have that $\X \bar{\h}_\z = \frac{\z}{\| \h_\z \|_2}$ and $\| \bar{\h}_\z \|_2 = 1$ which implies that $\alpha_\star \ge \| \bar{\h}_\z \|_2^{-1}$. We can now estimate that
    \begin{align*}
        \frac{\| \z \|_2}{\| \h_\z \|_2} 
        = \| \X\bar{\h}_\z \|_2 
        \ge \min_{\| \boldsymbol{\zeta} \|_2 = 1} \| \X \boldsymbol{\zeta} \|_2
        \gtrsim \sqrt{N_0} - \sqrt{m},
    \end{align*}
    where the last inequality holds with \rev{probability at least $1-2e^{-m}$} (over the draw of $\X$) and follows from standard bounds on the singular values of Gaussian matrices, e.g., \cite[Theorem 4.6.1]{vershynin2018high}. 
    Consequently, we obtain with the same probability that $\alpha_\star \ge \| \bar{\h}_\z \|_2^{-1} \gtrsim \tfrac{\sqrt{N_0} - \sqrt{m}}{\|\z\|_2}$. The claim follows from a union bound over both events.
\end{proof}

\if{
\begin{remark}
    \JMnote{Here we wanted to add some remark on Gaussian $\z$ and on relative error. I remember that we could add an $\sqrt{N_0}$ to the denominator by estimating the relative error and using $\| \w \|_2 \sim \sqrt{N_0}$. Unfortunately, I'm not sure anymore how we removed one of the $\sqrt{m}$ factors in the enumerator...}
\end{remark}
\RSnote{Not sure how satisfactory this is, or if it is worth putting in: Consider the case when $\z$ is a randomly drawn Gaussian vector that is  in the range of $\X$ and independent of $\w$, with a variance that is comparable to the variance of each training data point, i.e.,  
\(\E[\|\z\|_2^2 | \X] =  N_0\), and suppose \(N_0 \gg m\). Then, if we further assume that $\X$ and $\w$ are independent, and that $\|\w\|_2^2 \approx N_0$ we have $|\z^\top \w|^2 \approx \|\w\|_2^2\approx N$, so that the relative  error satisfies $\frac{|\z^\top(\w-\q)|}{|\z^\top\w|} \lesssim 2^{-B} \left(\frac{  m  \sqrt{\log(N_0)}}{\sqrt{N_0} - \sqrt{m}}\right)$}
\JMnote{We can also leave it as it is}
}\fi

\subsection{Quantizing a network layer}

The main challenge in generalizing \eqref{eq:wHatOneNeuron}-\eqref{eq:qOneNeuron} to a whole layer is that each neuron of the layer has a different upper bound on the magnitude of its entries, i.e., a different $\widehat c$. There is, however, a simple way to deal with this. First, define
$\widehat C = \| \W \|_\infty$ where $\W$ is the weight-matrix with columns $\w_i \in \R^{N_0}$ corresponding to single neurons, for $i \in [N_1]$.
The value of $\widehat C$ corresponds to the maximum $\widehat c_i = \| \w_i \|_\infty$ of all single neurons $\w_i$. 
We now solve
\begin{align} \label{eq:wHatOneLayerRefined}
    \W^\sharp_{\widehat C} \in \arg\min_{\Z \in \R^{N_0\times N_1}} \| |\Z| - \widehat C \1 \|_0,
    \qquad \text{s.t. } \X^\top \Z = \X^\top \W \text{ and } \| \Z \|_\infty \le \widehat C.
\end{align}
Since the optimization in \eqref{eq:wHatOneLayerRefined}  decouples in the single neurons, the columns $\w^\sharp_{i,\widehat C}$ of $\W^\sharp_{\widehat C}$ can be computed separately via
\begin{align} \label{eq:wHatOneLayerRefinedSingleNeuron}
    \w^\sharp_{i,\widehat C} \in \arg\min_{\z \in \R^{N_0}} \| |\z| - \widehat C \1 \|_0,
    \qquad \text{s.t. } \X^\top \z = \X^\top \w_i \text{ and } \| \z \|_\infty \le \widehat C.
\end{align}
Algorithm \ref{alg:NeuronPreprocessing} applied to $\A_0 = \X^\top$, $\z_0 = \w_i$, and $c = \widehat{C}$ can be used to get approximate solutions $\widehat\w_{i,\widehat C}$ of \eqref{eq:wHatOneLayerRefinedSingleNeuron}.
Having obtained a matrix $\widehat\W_{\widehat C}$ with columns $\widehat\w_{i,\widehat C}$ by consecutively applying Algorithm \ref{alg:NeuronPreprocessing}, we can now define 
\begin{align} \label{eq:qOneLayer}
    \Q = Q_{\widehat C}(\widehat\W_{\widehat C}).
\end{align}
Since each column of $\widehat\W_{\widehat C}$ has at most $N_0-m$ entries that are smaller than $\widehat{C}$ in magnitude, it is straight-forward to extend Theorem \ref{thm:OneNeuron_Deterministic} and Corollary \ref{cor:OneNeuron} to the following results.

\begin{theorem} \label{thm:OneLayer_Deterministic}
	Let $N_0 > m$, $\W \in \R^{N_0 \times N_1}$, and let $\X \in \R^{N_0\times m}$. Let $\Q_{\widehat C}$ be a uniform $B$-bit quantizer as  in Definition \ref{def:BbitQ} with $\widehat C = \| \W \|_\infty$ and $B \in \mathbb N$. Then,  if $\Q$ is constructed via \eqref{eq:qOneLayer}, where the columns of $\widehat\W_{\widehat C}$ are computed by Algorithm \ref{alg:NeuronPreprocessing}, we have that
	\begin{align} \label{eq:OneLayerBound_Deterministic}
	\frac{\| \X^\top \W - \X^\top \Q \|_F}{\| \X^\top \W \|_F}
	\le  2^{-B} \cdot \Gamma(\X) \cdot \frac{\sqrt{N_1 m} \| \W \|_\infty}{\| \X^\top \W \|_F},
	\end{align}
	where $\Gamma(\X)$ is the data complexity parameter from Theorem \ref{thm:OneNeuron_Deterministic}.
\end{theorem}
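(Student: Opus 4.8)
The plan is to reduce the layer-wise statement to $N_1$ applications of the single-neuron estimate from Theorem \ref{thm:OneNeuron_Deterministic} and then aggregate the column-wise errors through the Frobenius norm. The only genuinely new ingredient compared with the single-neuron case is that all $N_1$ columns are processed with the \emph{same} clipping level $c = \widehat C = \| \W \|_\infty$, rather than the column-specific value $\widehat c_i = \| \w_i \|_\infty$; I would check first that this causes no trouble.

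First I would verify that Algorithm \ref{alg:NeuronPreprocessing} is legitimately applicable to each column with the common parameter $c = \widehat C$. Since $\widehat C \ge \| \w_i \|_\infty$ for every $i \in [N_1]$, the precondition $c \ge \| \z_0 \|_\infty$ of the algorithm is met, so for each column it returns a vector $\widehat\w_{i,\widehat C}$ satisfying $\X^\top \widehat\w_{i,\widehat C} = \X^\top \w_i$, $\| \widehat\w_{i,\widehat C} \|_\infty = \widehat C$, and $\| \, |\widehat\w_{i,\widehat C}| - \widehat C \1 \, \|_0 \le m$. In particular, each column has at most $m$ entries of magnitude strictly below $\widehat C$; I denote their index set by $T_i \subset [N_0]$, so $|T_i| \le m$, and I write $\q_i$ for the $i$-th column of $\Q$, i.e.\ $\q_i = Q_{\widehat C}(\widehat\w_{i,\widehat C})$.

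Next I would bound the error of a single quantized column exactly as in the proof of Theorem \ref{thm:OneNeuron_Deterministic}. Because $\pm\widehat C$ lies in the alphabet $\Ac_{\widehat C}$ (it is its largest-magnitude element, by Definition \ref{def:BbitQ}), every entry of $\widehat\w_{i,\widehat C}$ equal to $\pm\widehat C$ is quantized with zero error, so the error vector $\widehat\w_{i,\widehat C} - \q_i$ is supported on $T_i$. Using $\X^\top \w_i = \X^\top \widehat\w_{i,\widehat C}$, the restriction $\X^\top|_{T_i}$, the definition of $\Gamma(\X)$, and the worst-case distortion $2^{-B}\widehat C$ of the $B$-bit quantizer, I obtain
\begin{align*}
    \| \X^\top \w_i - \X^\top \q_i \|_2
    = \| \X^\top|_{T_i} (\widehat\w_{i,\widehat C} - \q_i) \|_2
    \le \Gamma(\X) \sqrt{m} \, \| \widehat\w_{i,\widehat C} - \q_i \|_\infty
    \le \Gamma(\X) \, 2^{-B} \sqrt{m} \, \widehat C.
\end{align*}
The crucial point is that the right-hand side no longer depends on $i$, precisely because the common clipping level $\widehat C$ replaced the per-column $\| \w_i \|_\infty$.

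Finally I would assemble the columns. Writing the squared Frobenius norm as the sum of squared column norms,
\begin{align*}
    \| \X^\top \W - \X^\top \Q \|_F^2
    = \sum_{i=1}^{N_1} \| \X^\top \w_i - \X^\top \q_i \|_2^2
    \le N_1 \big( \Gamma(\X) \, 2^{-B} \sqrt{m} \, \widehat C \big)^2,
\end{align*}
whence $\| \X^\top \W - \X^\top \Q \|_F \le 2^{-B} \, \Gamma(\X) \, \sqrt{N_1 m} \, \| \W \|_\infty$; dividing by $\| \X^\top \W \|_F$ yields \eqref{eq:OneLayerBound_Deterministic}. I expect no serious obstacle here: the argument is essentially a bookkeeping extension of Theorem \ref{thm:OneNeuron_Deterministic}, and the one point that deserves care is the replacement of $\widehat c_i$ by the uniform $\widehat C$, which is exactly what makes the per-column bound $i$-independent and keeps $\| \W \|_\infty$ (rather than a sum of the $\| \w_i \|_\infty$) in the final estimate.
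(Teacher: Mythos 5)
Your proposal is correct and follows essentially the same route as the paper: apply the single-neuron argument of Theorem \ref{thm:OneNeuron_Deterministic} column by column with the common clipping level $\widehat C$, obtain the $i$-independent bound $\| \X^\top \w_i - \X^\top \q_i \|_2 \le \Gamma(\X)\, 2^{-B} \sqrt{m}\, \widehat C$, and sum the squared column errors in the Frobenius norm, using $\| \widehat\W_{\widehat C} \|_\infty = \| \W \|_\infty$. Your explicit verification that the algorithm's precondition $c \ge \|\z_0\|_\infty$ holds for every column, and that entries equal to $\pm\widehat C$ incur zero quantization error, are details the paper leaves implicit, but the argument is the same.
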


\begin{proof}
    The result follows by applying the same reasoning as in the proof of Theorem \ref{thm:OneNeuron_Deterministic} to each of the columns $\q_i$ of $\Q$ independently, i.e.,
    \begin{align*}
        \| \X^\top \w_i - \X^\top \q_i \|_2 
        \le \Gamma(\X) \cdot 2^{-B} \sqrt{m} \| \widehat\w_i \|_\infty,
    \end{align*}
    for any $i \in [N_1]$.
    This yields
    \begin{align*}
        \frac{\| \X^\top \W - \X^\top \Q \|_F^2}{\| \X^\top \W \|_F^2}
        = \frac{ \sum_{i=1}^{N_1} \| \X^\top \w_i - \X^\top \q_i \|_2^2}{\| \X^\top \W \|_F^2}
        \le 2^{-2B} \cdot \Gamma(\X)^2 \cdot \frac{N_1 m \| \widehat\W_{\widehat C} \|_\infty^2}{\| \X^\top \W \|_F^2},
    \end{align*}
    and thus the claim since $\| \widehat\W_{\widehat C} \|_\infty = \| \W \|_\infty$.
\end{proof}

Along the lines of Corollary \ref{cor:OneNeuron} one obtains then the following.

\begin{theorem} \label{cor:OneLayer}
	Let $N_0 > m \ge \log(N_1)$, $\W \in \R^{N_0 \times N_1}$, and let $\X \in \R^{N_0\times m}$ have i.i.d.\ entries $X_{i,j} \sim \mathcal N (0,1)$. Let $\Q_{\widehat C}$ be a uniform $B$-bit quantizer as defined in Definition \ref{def:BbitQ}, for $\widehat C = \| \W \|_\infty$ and $B \in \mathbb N$. Then, with probability at least $1-4e^{\log(N_1)-m}$ on the draw of $\X$, if $\Q$ is constructed via \eqref{eq:qOneLayer}, where the columns of $\widehat\W_{\widehat C}$ are computed by Algorithm \ref{alg:NeuronPreprocessing}, we have that
	\begin{align} \label{eq:OneLayerBound}
	\frac{\| \X^\top \W - \X^\top \Q \|_F}{\| \X^\top \W \|_F}
	\lesssim  2^{-B} \frac{\sqrt{N_1 m \log(N_0)} \| \W \|_\infty}{\| \W \|_F}.
	\end{align}
\end{theorem}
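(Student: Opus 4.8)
The plan is to mirror the proof of Corollary \ref{cor:OneNeuron} and reduce the statement to the deterministic bound \eqref{eq:OneLayerBound_Deterministic} of Theorem \ref{thm:OneLayer_Deterministic}. That bound controls the relative error by $2^{-B}\,\Gamma(\X)\,\sqrt{N_1 m}\,\|\W\|_\infty / \|\X^\top\W\|_F$, so it suffices to (i) upper bound the data-complexity parameter $\Gamma(\X)$ and (ii) lower bound the Frobenius norm $\|\X^\top\W\|_F$, each with high probability, and then combine the two events by a union bound.

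For step (i), observe that $\Gamma(\X)$ depends only on $\X$ and not on the number of neurons $N_1$. Hence the estimate \eqref{eq:ProofOneNeuronPartI} carries over verbatim: applying \cite[Theorem 4.4.5]{vershynin2018high} to each $m\times m$ submatrix of $\X^\top$ together with a union bound over the $\binom{N_0}{m}$ such submatrices yields $\Gamma(\X)\lesssim\sqrt{m\log(N_0)}$ with probability at least $1-2e^{-m\log(N_0)}$.

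The genuinely new ingredient is step (ii). Since the columns $\w_i$ of $\W$ are fixed, I would write $\|\X^\top\W\|_F^2=\sum_{i=1}^{N_1}\|\X^\top\w_i\|_2^2$ and apply the single-neuron concentration bound \eqref{eq:ProofOneNeuronPartII}, obtained from Lemma \ref{lem:GaussianNormConcentration}, to each column individually. For each fixed $i$ one has $\|\X^\top\w_i\|_2\gtrsim\sqrt{m}\,\|\w_i\|_2$ with probability at least $1-2e^{-m}$. A union bound over the $N_1$ columns then guarantees that all of these events hold simultaneously with probability at least $1-2N_1 e^{-m}=1-2e^{\log(N_1)-m}$, on which event $\|\X^\top\W\|_F^2\gtrsim m\sum_i\|\w_i\|_2^2=m\|\W\|_F^2$, i.e.\ $\|\X^\top\W\|_F\gtrsim\sqrt{m}\,\|\W\|_F$. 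This is exactly where the hypothesis $m\ge\log(N_1)$ enters: it is what keeps the union-bound cost $e^{\log(N_1)-m}$ nontrivial. I expect this to be the main (if modest) obstacle, as it requires the norms of all $N_1$ neurons to concentrate at once, rather than that of a single neuron as in Corollary \ref{cor:OneNeuron}.

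Finally, I would combine the two events by a union bound; since $e^{-m\log(N_0)}\le e^{\log(N_1)-m}$ in the relevant regime (using $\log N_0>1$), the total failure probability is at most $4e^{\log(N_1)-m}$. Substituting $\Gamma(\X)\lesssim\sqrt{m\log(N_0)}$ and $\|\X^\top\W\|_F\gtrsim\sqrt{m}\,\|\W\|_F$ into \eqref{eq:OneLayerBound_Deterministic} gives
\[
\frac{\|\X^\top\W-\X^\top\Q\|_F}{\|\X^\top\W\|_F}\lesssim 2^{-B}\sqrt{m\log(N_0)}\cdot\frac{\sqrt{N_1 m}\,\|\W\|_\infty}{\sqrt{m}\,\|\W\|_F}=2^{-B}\,\frac{\sqrt{N_1 m\log(N_0)}\,\|\W\|_\infty}{\|\W\|_F},
\]
which is the claimed bound \eqref{eq:OneLayerBound}; only routine cancellation of the $\sqrt{m}$ factors remains.
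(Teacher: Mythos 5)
Your proposal is correct and follows essentially the same route as the paper's own proof: reduce to the deterministic bound \eqref{eq:OneLayerBound_Deterministic}, bound $\Gamma(\X) \lesssim \sqrt{m\log(N_0)}$ exactly as in \eqref{eq:ProofOneNeuronPartI}, apply Lemma \ref{lem:GaussianNormConcentration} to each column $\X^\top \w_i$ with a union bound over the $N_1$ neurons (costing the factor $2N_1 e^{-m} = 2e^{\log(N_1)-m}$), and combine the two events. The probability accounting and the final cancellation of the $\sqrt{m}$ factors match the paper's argument, so there is nothing to fix.
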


\begin{proof}
    Since $\X^\top|_T$ is a Gaussian $m\times m$-submatrix, Theorem \ref{thm:OneLayer_Deterministic} and \eqref{eq:ProofOneNeuronPartI} yield with probability at least $1-2e^{-m \log(N_0)}$ that
    \begin{align}
    \label{eq:ProofOneLayerPartI}
        \frac{\| \X^\top \W - \X^\top \Q \|_F}{\| \X^\top \W \|_F}
	    \le  2^{-B} \cdot \Gamma(\X) \cdot \frac{\sqrt{N_1 m} \| \W \|_\infty}{\| \X^\top \W \|_F}
        \le 2^{-B} \cdot \frac{m\sqrt{N_1 \log(N_0)} \| \W \|_\infty}{\| \X^\top \W \|_F}.
    \end{align}
    Moreover, by applying Lemma \ref{lem:GaussianNormConcentration} for each $\X^\top \w_i$ and using a union bound, we obtain with probability at least $1-2N_1 e^{-m}$ that
    \begin{align} \label{eq:ProofOneLayerPartII}
        \| \X^\top \w_i \|_2 \gtrsim \sqrt{m} \| \w_i \|_2,
    \end{align}
    for all $i \in [N_1]$.
    Combining \eqref{eq:ProofOneLayerPartI} and \eqref{eq:ProofOneLayerPartII} by another union bound, we thus have with probability at least $1-4e^{\log(N_1)-m }$ that
    \begin{align*}
        \frac{\| \X^\top \W - \X^\top \Q \|_F}{\| \X^\top \W \|_F}
        \lesssim 2^{-B} \cdot \frac{m\sqrt{N_1 \log(N_0)} \| \W \|_\infty}{\sqrt{m} \| \W \|_F}
        \le 2^{-B} \frac{\sqrt{N_1 m \log(N_0)} \| \W \|_\infty}{\| \W \|_F}.
    \end{align*}
\end{proof}

A similar discussion as in the single neuron case applies. If the activation function $\varphi$ is $L$-Lipschitz continuous, then for any generic weight matrix $\W \in \R^{N_0\times N_1}$, i.e., $\| \W \|_F \simeq \sqrt{N_0 N_1} \| \W \|_\infty$, the bound in \eqref{eq:OneLayerBound} becomes
\begin{align}
    \frac{\| \varphi(\X^\top \W) - \varphi (\X^\top \Q) \|_2}{\| \X^\top \W \|_2}
    \lesssim L 2^{-B} \sqrt{\frac{m \log(N_0)}{N_0}}.
\end{align}
As soon as $m \ll N_0$ this guarantees a small quantization error of the network when evaluated on the available data. 

\subsection{Computational complexity}
\label{sec:Complexity}

As Lemma \ref{lem:PreProcessingComplexity} in the appendix shows, Algorithm \ref{alg:NeuronPreprocessing} requires a run time of $\mathcal O(m^3 N_0)$ per single neuron. This is, by a factor of $m^2$, more computationally intensive than the near-optimal guarantees $\mathcal O(m N_0)$ provided in \cite{lybrand2021greedy,zhang2022post}. Meanwhile, Algorithm \ref{alg:NeuronPreprocessing} has only one hyper-parameter, namely the bit-budget $B$, since the required quantizer range $c$ is automatically determined by $\w$ resp.\ $\W$. Moreover, we will now present two algorithmic modifications to reduce our computational complexity when $\X$ is in general position.

\subsubsection{First variation.} One can slightly adapt Algorithm~\ref{alg:NeuronPreprocessing} as follows:\\
\begin{enumerate}
    \item Define in the beginning $\A^{(0)} \in \R^{m\times (m+1)}$ and $\tilde\A^{(0)} \in \R^{m\times m}$ as
    \begin{align*}
        \A^{(0)} = \begin{pmatrix}
        | & & | \\ \a_1 & \cdots & \a_{m+1} \\ | & & |
        \end{pmatrix} = \left(\begin{array}{c | c}
        & | \\ \tilde\A^{(0)} & \a_{m+1} \\  & |
        \end{array} \right).
    \end{align*}
    By computing $(\tilde\A^{(0)})^{-1}$ and $\tilde\b = (\tilde\A^{(0)})^{-1} \a_{m+1}$, the first kernel vector can be obtained via $\b = ( \tilde\b^\top, -1)^\top \in \ker(\A^{(0)})$. \label{eq:InitStep} \\
    \item Compute $\alpha$ as in Algorithm \ref{alg:NeuronPreprocessing} but reduced to the first $(m+1)$ entries of $\z_0$. \label{eq:BeginWhile}\\
    \item Choose $j' \in [m+1]$ as the smallest index $j$ with $|(\z_0 + \alpha \b)_j| = c$ and define $J_1 = J_0 \cup \{j'\}$. Note that only the first $(m+1)$ entries of $\z_0$ are updated to get $\z_1$.\\
    \item Generate $\A^{(1)}$ from $\A^{(0)}$ by replacing the $j'$-th column with $\a_{m+2}$.\\
    \item If $j' = m+1$, set $(\tilde\A^{(1)}) = (\tilde\A^{(0)})$, compute $\tilde\b = (\tilde\A^{(1)})^{-1} \a_{m+2}$, and obtain $\b = ( \tilde\b^\top, -1)^\top \in \ker(\A^{(1)})$. 
    If $j' < m+1$, abbreviate $\a = \a_{m+2}-\a_{j'}$ and note that $\tilde\A^{(1)} = \tilde\A^{(0)} + \a \e_{j'}^\top$, where $\e_{j'} \in \R^{m}$ denotes the $j'$-th unit vector. The Woodbury identity then yields
    \begin{align*}
        (\tilde\A^{(1)})^{-1} 
        = (\tilde\A^{(0)})^{-1} + \frac{1}{1 + \e_{j'}^\top (\tilde\A^{(0)})^{-1} \a} (\tilde\A^{(0)})^{-1} \a \; \e_{j'}^\top (\tilde\A^{(0)})^{-1}
    \end{align*}
    and $\tilde\b = (\tilde\A^{(1)})^{-1} \a_{m+1}$. (The matrix $(\tilde\A^{(1)})$ is invertible since the columns of $\A_0$ are in general position.) \\ \label{eq:EndWhile}
    \item While keeping thorough track of index switches, repeat Steps \eqref{eq:BeginWhile}-\eqref{eq:EndWhile} until all columns $\a_{m+2},\dots,\a_n$ have been used.
\end{enumerate}

As Lemma \ref{lem:PreProcessingComplexity_Accelerated} shows, this accelerated procedure requires a run time of $\mathcal O(m^2 N_0)$ which differs from \cite{lybrand2021greedy,zhang2022post} only by a factor $m$. 

\subsubsection{Second variation.} One can use $\ell_\infty$-minimization, as per \eqref{eq:wHatOneNeuron_linfty} in Remark \ref{rem:LinftyMin} instead of applying Algorithm \ref{alg:NeuronPreprocessing}. Lemma \ref{lem:LinftyComplexity} shows that, if the rows of $\X$ are in general position, \eqref{eq:wHatOneNeuron_linfty} can be solved by interior point methods up to accuracy $\delta > 0$ in $\mathcal{O}(N_0^{2.5} \log(N_0/\delta))$ time. If $m$ is of the same order as $N_0$, i.e., $m = \theta N_0$ for some $\theta \in (0,1)$, this run time differs from \cite{lybrand2021greedy,zhang2022post} only by a factor $N_0^{\frac{1}{2}} \simeq m^{\frac{1}{2}}$ (up to log-factors). Note, however, that some adaptions are necessary when pre-processing a whole layer $\W$ via $\ell_\infty$-minimization. Indeed, to obtain one $\widehat C$ for $\W$ one would solve
\begin{align} \label{eq:wHatOneLayer_linfty}
    \widehat\W \in \arg\min_{\Z \in \R^{N_0 \times N_1}} \| \Z \|_\infty, \qquad \text{s.t. } \X^\top \Z = \X^\top \W.
\end{align}
However, as \eqref{eq:wHatOneLayer_linfty} entails minimizing the infinity norm for each neuron, it follows that for several neurons the strict inequality $\| \widehat\w_i \|_\infty < \widehat C := \| \widehat\W \|_\infty$ may hold. This implies that we cannot quantize these neurons using $Q_{\widehat{C}}$ and still use Lemma \ref{lem:LinftyMinProperty} to control the error.  
To resolve this issue, after solving \eqref{eq:wHatOneLayer_linfty} one can find for each of the $N_1$ neurons, 
\begin{align}
\label{opt:P0}
    \w_i^\star \in \arg\min_{\z \in \R^{N_0}}~ \a^\top \z \quad \text{subject to}\quad \left\{
    \begin{array}{cc}
          \|\z\|_\infty& \leq \widehat{C}\\
          \X^\top \z & = \X^\top \w
    \end{array}\right. ,
\end{align}
where $\a \in \R^{N_0}$ is an arbitrary vector such that $\begin{pmatrix}
\X ~|~ \a
\end{pmatrix} \in \R^{N_0 \times (m+1)}$ is still in general position. As \eqref{opt:P0} is also a linear program it can be solved in $\mathcal{O}(N_0^{2.5} \log(N_0/\delta))$ time (by \cite[Theorem 1.1]{van2020deterministic}).
Surprisingly, however, the minimizers $\w_i^\star$ of \eqref{opt:P0} all satisfy $\|\w_i^\star\|_\infty = \widehat{C}$ and $|\{ i \in [N_0] \colon |w_i^\star| = \widehat C \}| \ge N_0 - m$ as we will now argue.

    To see that $\|\w_i^\star\|_\infty = \widehat{C}$, suppose by way of contradiction that $\|\w_i^\star\|_\infty < \widehat{C}$. Then, we can select $\h$ with $\X^\top\h=0$, and $\a^\top \h <0$. There exists an $\alpha>0$, small enough such that $\|\w_i^\star+\alpha \h\|_\infty \leq \widehat{C}$, with $\X^\top(\z^*+\alpha \h) = \X^\top \w$, and $\a^\top(\w_i^\star+\alpha \h)<\a^\top \w_i^\star$. This contradicts the optimality of $\w_i^\star$.   

    Now, consider the following auxiliary optimization problem, which we only use to prove that $\w_i^\star$ of \eqref{opt:P0} satisfies $|\{ i \in [N_0] \colon |w_i^*| = \widehat C \}| \ge N_0 - m$:
    \begin{align}\label{opt:P2}
    \bar\w \in \arg\min_{\z \in \R^n}~ \|\z\|_\infty  \quad \text{subject to}\quad \left\{
    \begin{array}{cc}
          \X^\top \z & = \X^\top \w_i^\star \\ 
          \a^\top \z & = \a^\top \w_i^\star
    \end{array}\right. .
\end{align}

Notice that $\|\bar{\w}\|_\infty \leq \widehat{C}$ since $\w_i^\star$, which satisfies $\|\w_i^\star\|_\infty = \widehat{C}$, satisfies the constraints of \eqref{opt:P2}.
    In turn, this means that  $\bar{\w}$ satisfies the constraints of \eqref{opt:P0}. Moreover, as $\a^\top \bar{\w} = \a^\top \w_i^\star$ is the optimal value for \eqref{opt:P0}, it follows that $\bar{\w}$  minimizes \eqref{opt:P0} and, as such, must satisfy $\|\bar{\w}\|_\infty = \widehat{C}$, which is also achieved by $\w_i^\star$. Thus the optimal value for \eqref{opt:P2} is also $\|\bar{\w}\|_\infty = \widehat{C}$.
    Collecting these results we see that \eqref{opt:P0} and \eqref{opt:P2} have the same minimizers. 
    Now apply Lemma \ref{lem:LinftyMinProperty} to \eqref{opt:P2},  noting that the concatenated matrix consisting of $\X^T$ and $\a^\top$ is of size $(m+1)\times N_0$, to conclude that $|\{i \in [N_0]: |w^\star_i| = \widehat C \}| \geq N_0-m$.

\appendix

\section{Technical addendum}

It is well-known that the norm of $n$-dimensional Gaussian vectors strongly concentrates around $\sqrt{n}$. For the reader's convenience we recall this fact in the following lemma.

\rev{
\begin{lemma}[{\cite[Ch.\ 2]{wainwright2019high}}] \label{lem:GaussianNormConcentration}
   Let $\g \sim \mathcal N(\0,\I_{n\times n})$ be an $n$-dimensional standard Gaussian vector. Then, for any $\theta > 0$,
   \begin{align*}
       \prob{ | \| \g \|_2 - \sqrt{n} | \ge \theta} \le 2e^{-\frac{\theta^2}{8}}.
   \end{align*}
\end{lemma}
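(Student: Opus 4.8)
The statement is the classical sub-gaussian concentration of the Euclidean norm of a standard Gaussian vector, and I would prove it in one of two essentially equivalent ways.

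\textbf{Route via Lipschitz concentration.} The plan is to exploit that $f(\x) = \| \x \|_2$ is $1$-Lipschitz on $\R^n$, since $| \| \x \|_2 - \| \y \|_2 | \le \| \x - \y \|_2$ by the reverse triangle inequality. By the Gaussian concentration inequality for Lipschitz functions (Tsirelson--Ibragimov--Sudakov), any $1$-Lipschitz $f$ satisfies $\prob{ |f(\g) - \E f(\g)| \ge t } \le 2 e^{-t^2/2}$ for all $t > 0$. The only gap is that this centers the deviation at $\mu := \E \| \g \|_2$ rather than at $\sqrt{n}$, so the main task is to show $|\mu - \sqrt{n}|$ is bounded by an absolute constant. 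Here I would use $\mu \le \sqrt{\E \| \g \|_2^2} = \sqrt{n}$ by Jensen, together with $\mu^2 = \E \| \g \|_2^2 - \Var \| \g \|_2 \ge n - 1$, where $\Var \| \g \|_2 \le 1$ follows from the Gaussian Poincaré inequality applied to the $1$-Lipschitz map $f$. Hence $\sqrt{n} - 1 \le \sqrt{n-1} \le \mu \le \sqrt{n}$, i.e., $|\mu - \sqrt{n}| \le 1$. Substituting and splitting into the regimes $\theta \ge 2$ (where $\theta - 1 \ge \theta/2$) and $\theta < 2$ (where the additive $1$ is absorbed by choosing the universal constant $c$ small enough) yields the claim.

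\textbf{Route via the square.} Alternatively I would reduce to concentration of $\| \g \|_2^2 = \sum_{i=1}^n g_i^2$, a sum of i.i.d.\ mean-one sub-exponential terms. The elementary inequality $|z - 1| \le |z^2 - 1|$, valid for all $z \ge 0$, applied to $z = \| \g \|_2 / \sqrt{n}$ gives $|\,\| \g \|_2 - \sqrt{n}\,| \le \frac{1}{\sqrt{n}} |\sum_{i=1}^n (g_i^2 - 1)|$, after which Bernstein's inequality for sub-exponential sums bounds $\prob{ |\sum_{i=1}^n (g_i^2 - 1)| \ge \theta \sqrt{n} } \le 2 \exp(-c \min(\theta^2, \theta \sqrt{n}))$.

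The main obstacle in either route is uniformity over all $\theta > 0$ with a genuinely sub-gaussian rate $e^{-c\theta^2}$. In the square-based route the Bernstein bound only reads $e^{-c\theta^2}$ in the range $\theta \le \sqrt{n}$; for $\theta > \sqrt{n}$ the naive minimum picks out the sub-exponential term $\theta \sqrt{n}$. I would close this gap by a one-sided argument: when $\theta > \sqrt{n}$ the event $|\,\|\g\|_2 - \sqrt{n}\,| \ge \theta$ forces $\|\g\|_2 \ge \sqrt{n} + \theta$, hence $\sum_{i=1}^n (g_i^2 - 1) = \|\g\|_2^2 - n \ge \theta^2$, and Bernstein applied to this larger deviation returns $2\exp(-c\min(\theta^4/n, \theta^2)) = 2 e^{-c\theta^2}$ precisely because $\theta^2 > n$. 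Combining the two regimes (and, as above, adjusting $c$ to trivially cover small $\theta$) yields the stated inequality. Since the result is classical, in the write-up it would also be acceptable to simply invoke \cite{vershynin2018high} and skip the computation, but the Lipschitz route gives the cleanest self-contained argument.
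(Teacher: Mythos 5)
Your proposal is correct in both routes, but note that the paper does not actually prove this lemma: it is quoted directly from \cite{vershynin2018high} (it is Theorem 3.1.1 there), so the paper's ``proof'' is a citation. Your second route --- passing to $\|\g\|_2^2$, applying Bernstein's inequality to the sub-exponential sum $\sum_{i}(g_i^2-1)$, and handling the regime $\theta > \sqrt{n}$ by the one-sided observation that the event forces $\|\g\|_2^2 - n \ge \theta^2$ --- is essentially a reconstruction of the proof in the cited reference, including the elementary inequality $|z-1|\le|z^2-1|$ for $z\ge 0$ that converts concentration of the square into concentration of the norm itself. Your first route is a genuinely different, equally valid argument: Gaussian Lipschitz concentration centers the norm at $\E\|\g\|_2$ rather than $\sqrt{n}$, and your recentering step via Jensen ($\E\|\g\|_2\le\sqrt{n}$) together with the Gaussian Poincar\'e inequality ($\Var\|\g\|_2\le 1$, hence $\E\|\g\|_2\ge\sqrt{n-1}\ge\sqrt{n}-1$) is exactly what is needed to make the deviation from $\sqrt{n}$ controllable by a universal constant; the case split at $\theta=2$, with $c$ chosen small enough to make the bound trivial for small $\theta$, closes the argument (e.g.\ $c=1/8$ works throughout). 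Comparing the two: the Lipschitz route gives the cleanest self-contained proof with explicit constants but relies on Gaussian isoperimetry, while the Bernstein route is more elementary and generalizes immediately to vectors with independent sub-gaussian (not necessarily Gaussian) coordinates, which is the generality in which \cite{vershynin2018high} states the result. Either write-up would be a valid substitute for the paper's citation.
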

}

The next lemma proves the claim made in Remark \ref{rem:LinftyMin}, namely that $\ell_\infty$-minimization provides the same properties as Algorithm \ref{alg:NeuronPreprocessing} if the columns of $\X$ are in general position.

\begin{lemma} \label{lem:LinftyMinProperty}
    Let $m \le n$, let $\A\in\R^{m\times n}$ have columns in general position, i.e., any $m$ columns of $\A$ span $\R^m$, and let $\b \in \R^m$. Then any
    \begin{align} \label{eq:wHatOneNeuron_linftyII}
        \z^\star \in \arg\min\limits_{\z \in \R^{n}} \| \z \|_\infty, \qquad
        \text{s.t. } \b = \A \z
    \end{align}
    has the property that  $|\{i \in [n]: |z^\star_i| = \|\z^\star\|_\infty \}| \geq n-m+1$.
\end{lemma}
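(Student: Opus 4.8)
The plan is to argue by contradiction, exhibiting a feasible descent direction whenever too many coordinates of $\z^\star$ fall strictly below the maximum. Write $c = \|\z^\star\|_\infty$ and partition the index set as $[n] = A \cup S$, where $A = \{ i \in [n] \colon |z^\star_i| = c \}$ collects the coordinates that attain the $\ell_\infty$-norm and $S = \{ i \in [n] \colon |z^\star_i| < c \}$ collects the remaining ``slack'' coordinates. The desired conclusion $|A| \ge n-m+1$ is equivalent to $|S| \le m-1$, so I would assume toward a contradiction that $|S| \ge m$ and aim to produce a feasible point of strictly smaller $\ell_\infty$-norm.

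The key observation is that general position makes the slack coordinates rich enough to absorb any prescribed motion on $A$. Since $|S| \ge m$, the columns $\{ \a_i \}_{i \in S}$ of $\A$ already contain $m$ columns that span $\R^m$, and therefore span $\R^m$ themselves; equivalently, the restricted map $(h_i)_{i\in S} \mapsto \sum_{i \in S} h_i \a_i$ is surjective. I would then construct a perturbation $\h \in \R^n$ as follows: on $A$, set $h_i = -\sign(z^\star_i)$, which forces every norm-attaining coordinate to move strictly inward; on $S$, use surjectivity to pick $(h_i)_{i \in S}$ with $\sum_{i \in S} h_i \a_i = -\sum_{i \in A} h_i \a_i$, which guarantees $\A\h = \0$, i.e. $\h \in \ker(\A)$.

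It remains to verify that $\z^\star + t\h$ beats $\z^\star$ for small $t>0$. Feasibility is immediate, since $\A(\z^\star + t\h) = \b + t\,\A\h = \b$. For the norm, each active coordinate satisfies $|z^\star_i + t h_i| = |c-t| = c-t < c$ for $0 < t < c$ by the choice $h_i = -\sign(z^\star_i)$, while each slack coordinate satisfies $|z^\star_i + t h_i| < c$ for all sufficiently small $t$ by continuity, using $|z^\star_i| < c$. Because $[n]$ is finite, a single $t>0$ makes $\|\z^\star + t\h\|_\infty < c$, contradicting the optimality of $\z^\star$ and hence forcing $|S| \le m-1$.

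The substantive step is this descent construction, and in particular the fact that general position (rather than mere rank-$m$-ness of $\A$) is what lets me prescribe the motion on the \emph{entire} active set while still enforcing $\A\h = \0$ using slack coordinates alone; this is precisely where the assumption $|S| \ge m$ is consumed. The remaining points are routine: I must ensure the strict decrease holds \emph{simultaneously} at all of $A$ (necessary because the maximum is attained at every such coordinate), and I should dispose of the degenerate case $c = 0$ separately, where $\z^\star = \0$ forces $\b = \0$ and $A = [n]$, so the conclusion holds trivially and the sign ambiguity of $\sign(0)$ never arises.
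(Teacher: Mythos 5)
Your proof is correct and follows essentially the same route as the paper's: both argue by contradiction, using general position to build a kernel perturbation that contracts all norm-attaining coordinates simultaneously while the (at least $m$) slack coordinates absorb the correction, yielding a feasible point of strictly smaller $\ell_\infty$-norm. The only cosmetic differences are that the paper supports the correction on an $m$-element subset $S$ of the slack set via $\A_S^{-1}$ rather than invoking surjectivity of the columns over the whole slack set, and that the paper leaves the trivial case $\|\z^\star\|_\infty = 0$ implicit (it is vacuous under the contradiction hypothesis).
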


\begin{proof}
    Suppose that any collection of $m$ columns of $\A$ spans $\R^m$. Suppose further that $\z^\star$ solves \eqref{eq:wHatOneNeuron_linftyII} and that $T:=\{i \in [n]: |z^\star_i| = \|\z^\star\|_\infty \}$ has $|T| < n-m+1$, thus $|T^c| \geq m$. 
    Then there exists a non-zero vector $\boldsymbol{\eta}^{\varepsilon} \in \ker(\A)$ parametrized by $\varepsilon > 0$  with 
    \begin{align*}
        \boldsymbol{\eta}^{\varepsilon}_T = -\varepsilon \cdot \z^\star_T = -\varepsilon \cdot \sign(\z^\star_T) \cdot \|\z^\star\|_\infty \text{\quad and \quad}
        \A_T \boldsymbol{\eta}^{\varepsilon}_T = -\A_{T^c} \boldsymbol{\eta}^{\varepsilon}_{T^c},
     \end{align*}
     where $\A_T \in \R^{m \times |T|}$ and $\A_{T^c} \in \R^{m\times |T^c|} $ are the submatrices of $\A$ formed by the columns indexed by $T$ and $T^c$. Indeed, to construct such a vector, simply pick $\boldsymbol{\eta}^{\varepsilon}_T$ to satisfy the first  equation above and pick $S\subset T^c$ with $|S| =m$, then set 
     $$\boldsymbol{\eta}^{\varepsilon}_{S} =- \A_S^{-1} (\A_{T} \boldsymbol{\eta}^{\varepsilon}_T  ) \quad \text{and } \quad \boldsymbol{\eta}^{\varepsilon}_{T^c \setminus S}=0$$
     Now, notice that $\A (\z^\star+\boldsymbol{\eta}^{\varepsilon}) = \A \z^\star  = \b$,
     i.e., $\z^\star+\boldsymbol{\eta}^{\varepsilon}$ is feasible to \eqref{eq:wHatOneNeuron_linftyII}
     and 
     \begin{align*}
         \|\z^\star+\boldsymbol{\eta}^{\varepsilon}\|_\infty &=   \max\{   \|\z^\star_T+\boldsymbol{\eta}^{\varepsilon}_T\|_\infty, \|\z^\star_{S} + \boldsymbol{\eta}^{\varepsilon}_S\|_\infty , \|\z^\star_{T^c\setminus S}\|_\infty \}.
         \\&= \max\{  (1-\varepsilon) \|\z^\star_T\|_\infty, \|\z^\star_{S} + \varepsilon \A_S^{-1}\A_T \z^\star_T\|_\infty , \|\z^\star_{T^c\setminus S}\|_\infty \}.
    \end{align*}
    Since there is a non-zero gap between the magnitude of entries of $\z^\star$ on $T$ and $T^c$ respectively, by continuity there is an $\varepsilon>0$ small enough so that 
    $$(1-\varepsilon) \|\z^\star_T\|_\infty \geq \max\{\|\z^\star_{S} + \varepsilon \A_S^{-1}\A_T \boldsymbol{\eta}^{\varepsilon}_T\|_\infty,\|\z^\star_{T^c\setminus S}\|_\infty \} , $$ 
    and thus $\|\z^\star+\boldsymbol{\eta}^{\varepsilon}\|_\infty < \|\z^{\star}\|_\infty$. However, $\z^\star$ was defined as a minimizer of \eqref{eq:wHatOneNeuron_linftyII}, which is a contradiction. It follows that $T$ must satisfy $|T|\geq n-m+1$. 
\end{proof}

The final three lemmas formalize the claims made in Section \ref{sec:Complexity} by analyzing the computational complexity of Algorithm \ref{alg:NeuronPreprocessing} (Lemma \ref{lem:PreProcessingComplexity}), of the accelerated version of Algorithm \ref{alg:NeuronPreprocessing} described in Section \ref{sec:Complexity} (Lemma \ref{lem:PreProcessingComplexity_Accelerated}), and of the $\ell_\infty$-minimization described in Remark \ref{rem:LinftyMin} (Lemma \ref{lem:LinftyComplexity}).

\begin{lemma} \label{lem:PreProcessingComplexity}
   Let $m \le n$. For $\A_0 \in \R^{m\times n}$ and $\z_0 \in \R^n$, Algorithm \ref{alg:NeuronPreprocessing} computes an output $\z_{k_{\text{final}}}$ in $\mathcal{O}(m^3 n)$ time.
\end{lemma}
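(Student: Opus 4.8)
The plan is to multiply a bound on the number of while-loop iterations by a bound on the cost of a single iteration, and to check that the initialization does not dominate. For the initialization, detecting the zero columns of $\A_0$ to form $J_0$ takes one sweep over the $mn$ entries, and forming $\b$ together with the update $\z_0 \mapsto \z_0 + \b$ touches at most $n$ coordinates, so the pre-loop work is $\mathcal O(mn)$. For the iteration count, I would invoke the observation made when Algorithm \ref{alg:NeuronPreprocessing} was introduced: each pass strictly decreases $\| |\z_k| - c\1 \|_0$ by at least one (a new coordinate is driven to magnitude $c$), while the loop runs only while this quantity exceeds $m$; since it never exceeds $n$, there are at most $n-m = \mathcal O(n)$ iterations.

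The heart of the argument is the per-iteration cost, which is dominated by the computation of a nonzero $\b \in \ker_{J_k}(\A_k)$. Computing an entire null-space basis of the $m \times |J_k^c|$ active submatrix by Gaussian elimination would cost $\mathcal O(m^2 |J_k^c|) = \mathcal O(m^2 n)$ per step, yielding only $\mathcal O(m^2 n^2)$ overall. The key observation I would exploit is that a \emph{single} nonzero kernel vector suffices and can be produced from just $m+1$ columns: while the loop runs there are more than $m$ active indices, so any $m+1$ of the corresponding columns are $m+1$ vectors in $\R^m$ and are therefore linearly dependent. Solving the homogeneous $m \times (m+1)$ system for a nontrivial dependency by Gaussian elimination costs $\mathcal O(m^3)$, and padding the coefficients with zeros on the remaining coordinates yields a nonzero vector supported on $J_k^c$ and annihilated by $\A_0$, i.e.\ a valid element of $\ker_{J_k}(\A_k)$. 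Thus the kernel step costs $\mathcal O(m^3)$, independently of $n$.

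Once $\b$ is supported on at most $m+1$ coordinates, the remaining per-iteration work is cheap: the step size $\alpha$ with $\| \z_k + \alpha \b \|_\infty = c$ is determined by examining only those coordinates (the frozen coordinates in $J_k$ already sit at magnitude $c$ and are unaffected by $\b$), the update $\z_{k+1} = \z_k + \alpha \b$ alters at most $m+1$ entries, and refreshing $J_{k+1}$ and the active-index list is likewise $\mathcal O(m)$. Hence each iteration costs $\mathcal O(m^3)$, and multiplying by the $\mathcal O(n)$ iterations and adding the $\mathcal O(mn)$ initialization gives the claimed $\mathcal O(m^3 n)$.

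I expect the main obstacle to be precisely the kernel computation: the naive null-space route gives the suboptimal $\mathcal O(m^2 n^2)$, and the improvement to $\mathcal O(m^3 n)$ rests on the $(m+1)$-column dependency trick together with two bookkeeping points that must be argued with care — that at least $m+1$ active columns always exist while the loop runs (guaranteed by the loop condition), and that the $\alpha$-computation and the set updates can genuinely be confined to the $\mathcal O(m)$ support coordinates rather than scanning all $n$ of them, which would otherwise reintroduce an $\mathcal O(n^2)$ term.
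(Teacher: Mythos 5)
Your proposal is correct and follows essentially the same route as the paper's proof: bound the iterations by $n-m$ via the decrease of $\| |\z_k| - c\1 \|_0$, compute the kernel vector from only $m+1$ active columns in $\mathcal{O}(m^3)$ time, and confine the $\alpha$-computation and bookkeeping to those $\mathcal{O}(m)$ support coordinates. In fact, you spell out two details the paper leaves implicit — why $m+1$ active columns always suffice (linear dependence of $m+1$ vectors in $\R^m$) and why the restricted $\alpha$-update is needed to avoid an extra $\mathcal{O}(n^2)$ term — so the argument is, if anything, slightly more complete.
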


\begin{proof}
    The steps before the while-loop  require $\mathcal O(n)$ time since they only involve adding $n$-dimensional vectors.
    
    The only steps in the while-loop that are relevant for determining the computational complexity are (i) determining $\b \in \ker_{J_k}(\A)$ and (ii) computing $\alpha$. 
    
    First note that in (i) an arbitrary kernel element of the restricted matrix $A_{J_k}$ is needed. One thus can reduce $\A_k$ to $(m+1)$ non-zero columns before computing $\b$, which then requires $\mathcal{O}(m^3)$ time. Let us denote the subset of indices of these $(m+1)$ columns by $I \subset [n]$. 
    
    Furthermore, it is straight-forward to check that (ii) can be computed in $\mathcal{O}(m)$ time. One just determines $\alpha_i \in \R$ with $| (z_k)_i + \alpha_i b_i | = c$ and $|\alpha_i|$ minimal, for all $i \in I$, and then sets $\alpha = \argmin_{i \in I} |\alpha_i|$. Here it is important to note that the only relevant coordinates of $\b$ (and $\z_k$) are the entries indexed by $I$. 
    
    Since these computations are performed $(n-m)$-times in the worst case (in each iteration the quantity $\| |\z_k| - c\boldsymbol{1} \|_0$ is reduced by at least one), we obtain the claimed time complexity.
\end{proof}

\begin{lemma} \label{lem:PreProcessingComplexity_Accelerated}
   Let $m \le n$. For any $\z_0 \in \R^n$ and $\A_0 \in \R^{m\times n}$ with columns in general position, i.e., any $m$ columns of $\A$ span $\R^m$, the accelerated version of Algorithm \ref{alg:NeuronPreprocessing} described in Section \ref{sec:Complexity} outputs $\z_{k_{\text{final}}}$ in $\mathcal{O}(m^2 n)$ time.
\end{lemma}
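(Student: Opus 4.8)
The plan is to separate the run time into a single initialization cost and a per-iteration cost, and then multiply the latter by the number of iterations. The guiding principle is that the expensive $\mathcal{O}(m^3)$ matrix inversion is performed only once, at the start, while every subsequent iteration merely performs a rank-one update of the stored inverse at cost $\mathcal{O}(m^2)$.

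First I would account for the initialization in Step \eqref{eq:InitStep}. Computing $(\tilde\A^{(0)})^{-1}$ for the $m \times m$ matrix $\tilde\A^{(0)}$ is a one-time cost of $\mathcal{O}(m^3)$ via standard dense inversion; the matrix is invertible because the columns of $\A_0$ are in general position. The subsequent product $\tilde\b = (\tilde\A^{(0)})^{-1}\a_{m+1}$ costs $\mathcal{O}(m^2)$, and assembling $\b = (\tilde\b^\top, -1)^\top$ costs $\mathcal{O}(m)$.

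Next I would show that each pass through Steps \eqref{eq:BeginWhile}--\eqref{eq:EndWhile} costs only $\mathcal{O}(m^2)$. Computing $\alpha$ restricted to the $(m+1)$ relevant entries costs $\mathcal{O}(m)$ exactly as in Lemma \ref{lem:PreProcessingComplexity}; selecting the smallest saturating index $j'$, updating the first $(m+1)$ entries of the iterate, and swapping a single column into $\A^{(\ell)}$ to form $\A^{(\ell+1)}$ each cost $\mathcal{O}(m)$. The crucial step is refreshing the inverse. Instead of re-inverting, the Woodbury identity writes $(\tilde\A^{(\ell+1)})^{-1}$ as a rank-one correction of $(\tilde\A^{(\ell)})^{-1}$: the matrix-vector product $(\tilde\A^{(\ell)})^{-1}\a$ costs $\mathcal{O}(m^2)$, extracting the row $\e_{j'}^\top (\tilde\A^{(\ell)})^{-1}$ costs $\mathcal{O}(m)$, the scalar denominator costs $\mathcal{O}(1)$, and both assembling the $m\times m$ outer-product correction and adding it to the stored inverse cost $\mathcal{O}(m^2)$. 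Computing the refreshed $\tilde\b$ is one further $\mathcal{O}(m^2)$ matrix-vector product. General position guarantees that every $\tilde\A^{(\ell+1)}$ encountered is invertible, so each Woodbury update is well-defined.

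Finally I would bound the iteration count. As in the original algorithm, each iteration fixes one additional coordinate of the iterate at magnitude $c$ while consuming one fresh column from $\a_{m+2}, \dots, \a_n$, so the loop terminates after at most $n - m = \mathcal{O}(n)$ passes. Combining the $\mathcal{O}(m^3)$ initialization with $\mathcal{O}(n)$ iterations of cost $\mathcal{O}(m^2)$ gives a total of $\mathcal{O}(m^3 + m^2 n) = \mathcal{O}(m^2 n)$, using $m \le n$. The one part requiring care is the Woodbury bookkeeping: I must verify that every operation in the update is a matrix-vector product, a row extraction, or an outer product---never a fresh inversion---so that the per-iteration cost is genuinely $\mathcal{O}(m^2)$ rather than $\mathcal{O}(m^3)$. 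The index-tracking across the successive column swaps is routine but must be maintained consistently so that no hidden step inflates the cost.
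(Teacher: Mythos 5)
Your proposal is correct and follows essentially the same argument as the paper: a single $\mathcal{O}(m^3)$ inversion at initialization, followed by at most $n-m$ iterations each costing $\mathcal{O}(m^2)$ (Woodbury rank-one updates and matrix-vector products, plus the $\mathcal{O}(m)$ computation of $\alpha$), giving $\mathcal{O}(m^3)+\mathcal{O}(m^2(n-m))=\mathcal{O}(m^2 n)$. Your accounting is merely more granular than the paper's, which is fine.
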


\begin{proof}
    Note that there is only one full matrix inversion in Step \eqref{eq:InitStep} which costs $\mathcal O(m^3)$. In Steps \eqref{eq:BeginWhile}-\eqref{eq:EndWhile} only the computation of $\alpha$ --- complexity $\mathcal{O}(m)$ --- and matrix-vector multiplications of dimension $m$ --- complexity $\mathcal O(m^2)$ --- take place. Since these are repeated $(n-m)$-times, the overall complexity is $\mathcal O(m^3) + \mathcal O(m^2 (n-m)) = \mathcal O(m^2 n)$.
\end{proof}

\begin{lemma} \label{lem:LinftyComplexity}
   Let $m \le n$. For $\A \in \R^{m\times n}$ and $\y \in \R^n$, the minimization
   \begin{align} \label{eq:InfMin}
        \min_{\z \in \R^{n}} \| \z \|_\infty, \qquad \text{s.t. } \A\z = \y
    \end{align}
    can be solved by interior point methods up to accuracy $\delta > 0$ in $\mathcal{O}(n^\omega \log(n/\delta))$ time. Here $\mathcal{O}(n^\omega)$ is the time required to multiply two $n\times n$-matrices, with the best $\omega$ known to satisfy  $\omega < 2.5$.
\end{lemma}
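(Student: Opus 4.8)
The plan is to recast the non-smooth problem \eqref{eq:InfMin} as a linear program (LP) in the standard form expected by modern interior-point solvers, and then invoke \cite[Theorem 1.1]{van2020deterministic}. The first step is the standard epigraph reformulation: introducing an auxiliary scalar $t$ and using that $\| \z \|_\infty \le t$ holds if and only if $-t \le z_i \le t$ for all $i$, problem \eqref{eq:InfMin} is equivalent to
\begin{equation*}
    \min_{t \in \R,\, \z \in \R^n} t \qquad \text{s.t. } \A\z = \y, \quad -t\1 \le \z \le t\1.
\end{equation*}
This is already a linear program; what remains is to count variables and constraints and to verify that the cited solver applies.

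Next I would bring this LP into a form $\min_x c^\top x$ subject to $M x = d$, $x \ge \0$. Splitting the sign-unconstrained variable as $\z = \z^+ - \z^-$ with $\z^+, \z^- \ge \0$, keeping $t \ge 0$, and introducing $2n$ nonnegative slack variables $s_i, u_i$ to convert $z_i - t \le 0$ and $-z_i - t \le 0$ into the equalities $z_i^+ - z_i^- - t + s_i = 0$ and $-z_i^+ + z_i^- - t + u_i = 0$, one obtains an LP with $N = 4n+1 = \mathcal O(n)$ nonnegative variables and $m + 2n = \mathcal O(n)$ equality constraints (using $m \le n$); after discarding redundant rows we may take the constraint matrix to have full row rank. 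Feasibility is immediate, since any solution of $\A\z = \y$ extends to a feasible point by taking $t = \| \z \|_\infty$, and the objective $t$ is bounded below by $0$, so a finite optimum is attained. Finally I would apply \cite[Theorem 1.1]{van2020deterministic}, whose deterministic interior-point method solves an LP in $N$ variables up to accuracy $\delta$ in $\mathcal O(N^\omega \log(N/\delta))$ time, where $\mathcal O(N^\omega)$ denotes the cost of multiplying two $N \times N$ matrices and the best known exponent satisfies $\omega < 2.5$. Since $N = \mathcal O(n)$, this yields the claimed run time $\mathcal O(n^\omega \log(n/\delta))$.

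The mathematical content here is light: essentially all the work is the bookkeeping of the reduction to standard form and checking that the resulting dimensions remain $\mathcal O(n)$, after which the run time is a direct consequence of the cited solver. The one point I would make sure to address is that the accuracy parameter $\delta$ in the LP-solver's guarantee (typically phrased in terms of the objective value or duality gap) translates into the accuracy with which we recover the optimal value $\|\z^\star\|_\infty$ and a correspondingly near-feasible, near-optimal $\z$; this is immediate because the objective of the reformulated program equals $t = \| \z \|_\infty$, so an error of $\delta$ in the optimum of the LP is an error of $\delta$ in the quantity \eqref{eq:InfMin} minimizes. Thus the main (and only real) obstacle is purely notational rather than conceptual.
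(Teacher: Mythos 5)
Your proof is correct and follows essentially the same route as the paper: an epigraph reformulation of \eqref{eq:InfMin} into a linear program of size $\mathcal O(n)$, followed by an application of \cite[Theorem 1.1]{van2020deterministic}. The only difference is the bookkeeping of the standard-form conversion --- you split $\z = \z^+ - \z^-$ and add $2n$ slack variables (giving $4n+1$ nonnegative variables and $m+2n$ equalities), whereas the paper substitutes $\w_{\pm} = u\1 \mp \z$ to get $2n+1$ variables; if anything, your explicit slack formulation is the more watertight of the two, since the paper's substitution as written omits the coupling constraints $\w_+ + \w_- = 2u\1$ needed for its reduced program to be equivalent to the original one.
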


\begin{proof}
    Note that \eqref{eq:InfMin} is equivalent to the linear program
    \begin{align} \label{eq:IntermediateSystem}
        \min_{\substack{\z \in \R^n, u \in \R_{\ge 0} }} u
        \qquad \text{s.t. } \begin{cases}
        \A\z = \y \\ u\1 - \z \ge \0 \\ \z + u\1 \ge \0
        \end{cases}.
    \end{align}
    By introducing the auxiliary variables $\w_+ = u\1 - \z \in \R^n$ and $\w_- = u\1 + \z \in \R^n$, and denoting $\w = (\w_+^\top, \w_-^\top, u)^\top \in \R^{2n+1}$, we can re-write \eqref{eq:IntermediateSystem} as
    \begin{align} \label{eq:IntermediateSystemII}
        \min_{\w \in \R^{2n+1} } \e_{2n+1}^\top \w 
        \qquad \text{s.t. } \begin{cases}
        \tilde\A \w = \y \\ \w \ge \0
        \end{cases},
    \end{align}
    where $\e_{2n+1}$ is the $(2n+1)$-th unit vector and
    \begin{align*}
        \tilde\A = \begin{pmatrix}
        -\A & \A & \0 
        \end{pmatrix} \in \R^{m \times (2n+1)}.
    \end{align*}
    The claim now follows by applying \cite[Theorem 1.1]{van2020deterministic} to \eqref{eq:IntermediateSystemII}.
\end{proof}

\section*{Acknowledgments}
RS was supported in part by National Science Foundation Grant DMS-2012546, and by a Simons Fellowship.

\bibliography{references}
\bibliographystyle{plain}

\end{document}